\pdfoutput=1
\PassOptionsToPackage{dvipsnames}{xcolor}
\documentclass[11pt]{article}

\usepackage{acl}
\usepackage{amsthm}
\usepackage{nicefrac}
\usepackage{times}
\usepackage{latexsym}
\usepackage{graphicx} 
\usepackage{amsmath}
\usepackage{amssymb}
\usepackage[T1]{fontenc}
\usepackage{caption}
\usepackage{subcaption}
\usepackage[utf8]{inputenc}
\usepackage[dvipsnames]{xcolor}
\usepackage{listings}
\usepackage{booktabs}

\lstdefinelanguage{json}{
  basicstyle=\ttfamily\small,
  numbers=none,
  showstringspaces=false,
  breaklines=true,
  frame=lines,
  backgroundcolor=\color{gray!10},
  string=[s]{"}{"},
  stringstyle=\color{blue},
  comment=[l]{:},
  morecomment=[l]{,},
  commentstyle=\color{gray},
}

\usepackage{algorithm}
\usepackage{algpseudocode}
\usepackage{amsmath}
\usepackage{amssymb}

\usepackage{microtype}
\usepackage{mathtools}
\usepackage{inconsolata}
\usepackage[capitalize,noabbrev]{cleveref}
\usepackage{pgfplots}
\usepackage{multirow}
\usepackage{tablefootnote}
\usepackage[most]{tcolorbox}
\PassOptionsToPackage{hyphens}{url}\usepackage{hyperref}

\usepackage{tikz}
\usetikzlibrary{calc} 
\usetikzlibrary{backgrounds}
\pgfdeclarelayer{edgelayer}
\pgfdeclarelayer{nodelayer}
\pgfsetlayers{background,main,edgelayer,nodelayer}
\usepgfplotslibrary{groupplots}

\tikzstyle{node_value}=[fill=white, draw=black, shape=circle]
\tikzstyle{node_expectation}=[fill=white, draw=black, shape=rectangle, minimum width=1.3cm, minimum height=1.0cm]

\tikzstyle{edge_geq}=[-|>, ultra thick]
\tikzstyle{edge_sample}=[draw=black, dashed, -|>, ultra thick]
\tikzstyle{edge_approx}=[-, dotted, ultra thick, draw=black]

\newcount\Comments  
\newcommand{\kibitz}[2]{\ifnum\Comments=1\textcolor{#1}{#2}\fi}

\crefname{section}{Sec.}{Sec.}
\crefname{thm}{Thm.}{Theorem}
\crefname{appendix}{App.}{Appendices}
\crefname{algorithm}{Alg.}{Algorithms}
\crefname{equation}{Eq.}{Eqs.}
\crefname{figure}{Fig.}{Figs.}
\creflabelformat{equation}{#2\textup{#1}#3} 

\theoremstyle{plain}
\newtheorem{theorem}{Theorem}[section]

\newtheorem{lemma}[theorem]{Lemma}

\theoremstyle{definition}

\theoremstyle{remark}

\definecolor{tablecolor}{rgb}{0.8,0.8,0.8}

\newcommand\cut[1]{}

\newcommand{\squishlist}{
   \begin{list}{$\bullet$}
    { \setlength{\itemsep}{0pt}      \setlength{\parsep}{3pt}
      \setlength{\topsep}{3pt}       \setlength{\partopsep}{0pt}
      \setlength{\leftmargin}{1.5em} \setlength{\labelwidth}{1em}
      \setlength{\labelsep}{0.5em} } }

\newcommand{\squishlisttwo}{
   \begin{list}{$\bullet$}
    { \setlength{\itemsep}{0pt}    \setlength{\parsep}{0pt}
      \setlength{\topsep}{0pt}     \setlength{\partopsep}{0pt}
      \setlength{\leftmargin}{2em} \setlength{\labelwidth}{1.5em}
      \setlength{\labelsep}{0.5em} } }

\newcommand{\squishend}{
    \end{list}  }

{}
{}
{}

\title{Learning to Zoom Efficiently with a Contrastive Curriculum}

\author{Falko Helm\hspace{10mm} Iryna Gurevych \\
  Ubiquitous Knowledge Processing Lab (UKP Lab) \\
Department of Computer Science and Hessian Center for AI (hessian.AI) \\
Technical University of Darmstadt \\
  \href{www.ukp.tu-darmstadt.de}{www.ukp.tu-darmstadt.de} \\
  }

\begin{document}
\maketitle

\begin{abstract}
Using a zoom-in tool is an important foundational part of modern visual agents, because it allows to efficiently handle tasks involving high-resolution images. Most previous methods need an extensive warm-start supervised fine-tuning phase for teaching models zoom-in. We show that this is not necessary by proposing a new intrinsic reward for learning tool use in MLLMs without the need for additional labels or warm-start SFT. Our InfoNCE-style reward uses a curriculum of increasingly hard negative tool calls as a contrastive training signal. Empirical experiments on $V^*$, HRBench and MME-RealWorld show that our approach is competitive while being more efficient. When used as a drop-in replacement for SFT, we even outperform all baselines.
To directly measure the zoom-in ability of models, we further introduce the scalable synthetic Muffin\&Chihuahua (M\&C) dataset. Each image consists of a grid with every cell either showing a muffin or chihuahua. Leveraging the M\&C dataset's unique region of interest labels, we find that recall is the metric that most strongly correlates the zoom-in region with final task performance.
Our model and code for reproduction is publicly available \href{https://github.com/UKPLab/emnlp2026-zoom-in}{here}.

\end{abstract}
\section{Introduction}
\begin{figure*}[]
    \centering
    \resizebox{\linewidth}{!}{
    \includegraphics{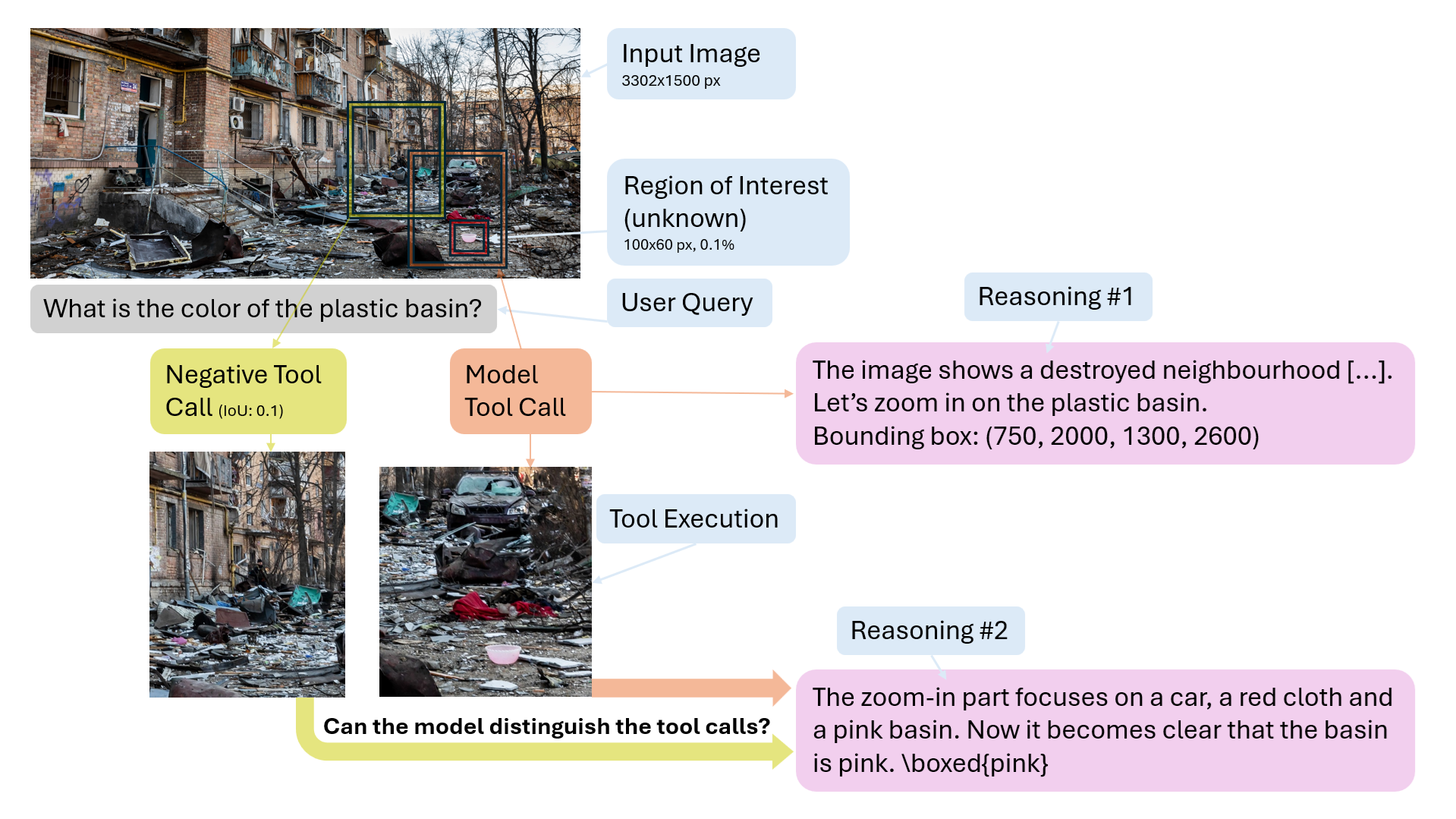}
    }
    \caption{Outline of the proposed method. Given an image and a query (original sample from the $V^*$ dataset \cite{Wu_vstar_2024}), which targets a small \textcolor{red}{Region of Interest}, the model indicates a \textcolor{orange}{Bounding Box} of which it receives a zoomed-in version. If the answer was correct, the \textcolor{orange}{Bounding Box} can be considered to be positive and we create a \textcolor{olive}{Negative Bounding Box} close to it. Then we compare the logits of \textcolor{cyan}{Reasoning\#2} conditioned on the original tool call with the ones conditioned on the negative tool call. Their Bradley-Terry score \cite{bradley-terry-model-52} is used as an additional reward for GRPO \cite{shao_deepseekmath_2024} to incentivize semantically meaningful zoom-in regions.}
    \label{fig:figure-1}
\end{figure*}
Visual Agents are becoming increasingly popular and are used in diverse applications, e.g. as Web Agents directly working on screenshots instead of HTML-code \cite{bytedance_seed_25} or for reverse-engineering video games into code \cite{qwen_3p5_2026}. These visual agents need the ability to dynamically use tools to interact with an environment that is presented visually.  
\textit{Zooming-in} is one of the most fundamental tools as it allows to efficiently handle high-resolution images with small Regions of Interest (RoIs) and is directly linked to the model's grounding capacity, i.e. how well the model understands Euclidean space \cite{sarch_grounded_25, park-etal-2025-iou-grounding}. 
Teaching to use the zoom-in tool has two goals, one of them being \textit{how} to use it syntactically and semantically correct and interpreting the tool outcome for its further reasoning. The other goal is to learn \textit{whether} to use the tool. Initially, the model might be reluctant to use the tool as it will lead to a lower reward, whereas later on, it should be careful when to use it, as tool use is an expensive operation. We focus on the initial exploration part in this work and do not explicitly consider tool use pruning. 
If using a zoom-in tool is taught in a supervised manner, both goals are addressed simultaneously as the model only needs to imitate the training data \cite{man_argus_2025, li_vocot_25}. Separating the goals only becomes meaningful in a Reinforcement Learning setup. There, the interesting aspect of learning a complex tool like zoom-in is that during trajectory generation, the model first needs to decide \textit{whether} to use the tool, before it decides \textit{how} to use it. But from a skill acquisition perspective, it makes sense to first master the tool before learning when to apply it. This mismatch in sequential order requires an additional reward mechanism to measure progress in how well the tool was used, independent of the final reward. This can of course be done by providing tool-use labels \cite{bai_qwen3-vl_2025, zhu_active-o3_2025}.
However, our method deliberately works without these labels, as creating them manually is a tedious task, especially if the environment is more complex.
Therefore, we develop a novel intrinsic tool-use reward (see Fig. \ref{fig:figure-1} for an intuitive explanation). It draws on ideas from mutual information maximization techniques \cite{oord_representation_2019}, but instead of re-sampling tool calls, we exclusively use out-of-distribution (hard) negative zoom-in regions (the yellow box in Fig. \ref{fig:figure-1}). This is because most benchmark images have a single RoI that has to be found (i.e. the red box in Fig. \ref{fig:figure-1}), so they do not benefit from diverse zoom-in operations. The negatives are automatically generated based on a silver label positive sample. The hardness of the negatives, measured as overlap with the positive bounding box, is increased during training, similar to curriculum learning \cite{bengio_curriculum_09}. To form the reward, positive and negative zoom-ins are contrasted by the effect they have on the logits of the subsequent turn.\\ During our evaluation we find that existing benchmarks are either limited in image size or do not contain ground truth zoom-in regions. To fill this gap, we introduce the Muffin\&Chihuahua (M\&C) dataset (Fig. \ref{fig:sample-muffin}), a scalable synthetic dataset, inspired by the observation that close-ups of chihuahua heads look strikingly like blueberry muffins\footnote{This was previously observed in memes (\url{https://knowyourmeme.com/memes/puppy-or-bagel}), and originally posted by Twitter user @teenybiscuit on March 9, 2016.}. It contains 36 splits across two tasks, four different image sizes and five image complexity levels. Additionally, M\&C's RoI labels allow us to evaluate the zoom-in ability independent of task performance. Our main contributions: \begin{itemize}
\itemsep0em
    \item An intrinsic reward mechanism by which the model learns how to use zoom-in without warm-start SFT or additional labels. 
    \item Competitive performance of our method on general benchmarks. If we use it as a drop-in replacement for SFT and keep training with pure RL, we even outperform baselines.
    \item The M\&C dataset for benchmarking zoom-in capabilities in a structured way across image size and task complexity.
\end{itemize} 
\section{Related Work}

\subsection{Multimodal Tool Use}
\label{sec:multimodal-tool-use}
There are several approaches that teach multimodal models how to agentically engage with images by using tools, with the most common tools being zoom-in \cite{lai_mini-o3_2025}, drawing \cite{hu_visual_2024} and search \cite{wu_mmsearch-r1_2025}. We focus on recent methods that are using Reinforcement Learning.


\paragraph{Two-stage SFT+RL} 
\begin{figure}[]
    \centering
    \resizebox{\linewidth}{!}{
    \begin{tikzpicture}
    \node (img) at (0,0) {\includegraphics[width=0.7\linewidth]{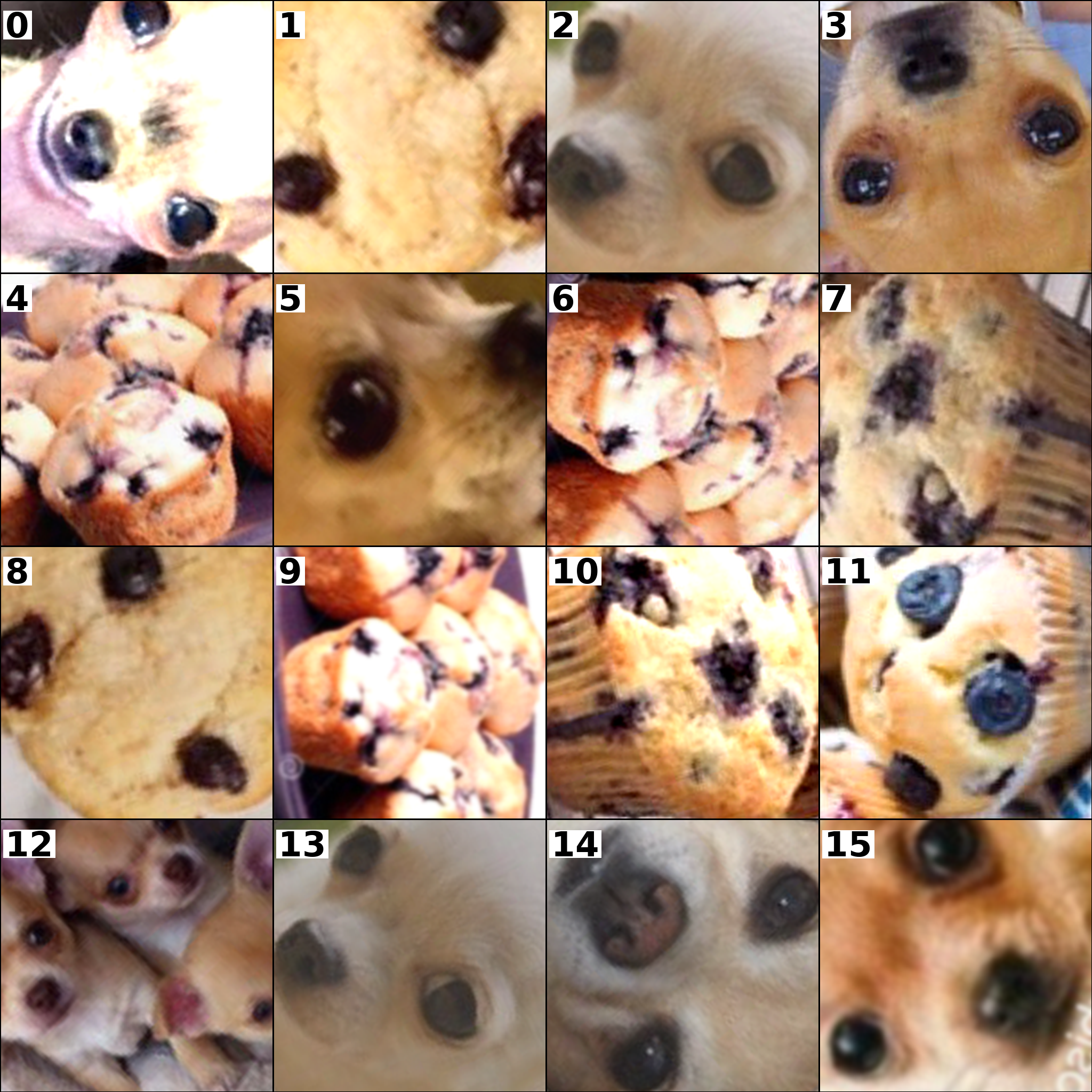}};

    \draw[
        decorate,
        decoration={brace,mirror,amplitude=3pt},
        line width=0.6pt
    ]
    ([yshift=-2pt]img.south west) --
    ([yshift=-2pt]img.south east)
    node[midway,below=8pt, font=\small]{$\text{grid size} = 4\times4$};

    \draw[
        decorate,
        decoration={brace,amplitude=3pt},
        line width=0.6pt
    ]
    ([yshift=0pt]img.north west) --
    ([yshift=0pt]img.north east)
    node[midway,above=0pt,font=\small]{$\text{image size} = 2048 \text{ px}$};

\end{tikzpicture}
    
    }
    \caption{Sample image from the M\&C dataset's split \textit{grid size} = 4, \textit{image size} = 2048, \textit{task} = single cell query. The eight cells numbered 1, 4, and 6-11 show muffins.}
    \label{fig:sample-muffin}
\end{figure}
 Supervised fine-tuning prior to Reinforcement Learning is a traditional set-up which goes under the name warm-start in the LLM community \cite{schaal_is_1999, ziegler_fine-tuning_2020}. By providing correct trajectories which the model does not need to generate itself, this technique allows to teach the model (rudimentary) forms of new behaviour quickly (e.g. using relative instead of absolute bounding boxes for zoom-in \cite{su_pixel_2025, lai_mini-o3_2025}), which are then refined in the RL stage. The major bottleneck is obtaining full trajectories for SFT, which is often done by prompting large proprietary models. 
 Pixel-Reasoner uses SFT followed by RL with a curiosity-based tool-use reward, which teaches the model to use the tool in 30\% of trajectories for every query \cite{su_pixel_2025}. 
 DeepEyes v2 and Mini o3 forego an explicit tool use reward in RL after doing extensive SFT \cite{hong_deepeyesv2_2025, lai_mini-o3_2025}. Qwen3-VL estimates the number of appropriate tool uses with a larger model before training and gives a tool-use reward by comparing them with the actual tool use count \cite{bai_qwen3-vl_2025}.
\paragraph{RL-only}
\begin{figure}[t]
\centering
\resizebox{.9\columnwidth}{!}{\begin{tikzpicture}

\begin{axis}[
    xmin=0, xmax=6,
    x dir=reverse,
    xtick={0,1,2,3,3.58,4,5,6},
    xticklabels={0,1,2,4,6,8,16,32},
    ymin=72, ymax=78,
    ytick={72, 73, ..., 78},
    xlabel=Tool Calls,
    ylabel=Average Accuracy,
    legend style={at={(0.85,0.4)}}
]

\addplot+[ultra thick,mark=*,black, mark size=3pt, mark options={scale=1, fill=white}] plot coordinates {
(0, 73.73)
(1, 75.4)
};
\addplot+[ultra thick,mark=*,pink, mark size=3pt, mark options={scale=1, fill=white}] plot coordinates {
(0, 73.97)
(1, 77.06)
};
\addplot+[ultra thick,mark=*,blue, mark size=3pt, mark options={scale=1, fill=white}] plot coordinates {
(0, 72.56)
};
\addplot+[ultra thick,mark=*,Cyan, mark size=3pt, mark options={scale=1, fill=white}] plot coordinates {
(1, 36.67)
(2, 57.84)
(3, 69.40)
(3.58, 72.64)
(4, 74.22)
(5, 75.77)
(6, 75.92)
};
\addplot[mark=none, black, dashed, domain=1:32] {75.4};
\addplot[mark=none, pink, dashed, domain=1:32] {77.06};

\addlegendentry{Ours}
\addlegendentry{Ours+}
\addlegendentry{no tool}
\addlegendentry{mini o3}

\end{axis}
\end{tikzpicture}}
\caption{Pareto optimality plot of Efficiency (x-axis; low number of tool calls) and overall Performance on general benchmarks (y-axis; high accuracy). Best models are in the upper-right corner. Ours+ gives the best performance with only a single tool call.}
\label{fig:efficiency}
\end{figure}
Without warm-start SFT, models will not stably pick up zoom-in without a tool-specific reward. Relying on accuracy reward alone causes the model to stop using the tool after a while because its average reward with tool will be lower than without tool as it is not proficient in its use yet \cite{su_pixel_2025}. This has sparked interest in using different tool-use rewards. VisionThink is about efficiently requesting the full-resolution image, given a downsampled version \cite{yang_visionthink_25}. Before training, they evaluate the model with both image resolutions to obtain labels whether tool use is actually beneficial for this specific instance, to avoid over-using the tool. This method of rewarding efficient tool use is similar to Qwen3-VL, although they scale it to more complex tools \cite{bai_qwen3-vl_2025}. DeepEyes v1 uses a conditional tool use reward which only applies when the final answer was correct \cite{zheng_deepeyes_2025}. Operationalizing the secondary reward to be conditional on the main reward is supported by \cite{liu_gdpo_2026}, which deem it more effective to prevent the model from hacking the easier secondary reward than keeping it independent and downweighting it. Active-o3 distinguishes between task and sensing model and leverages ground truth bounding boxes and heuristics (multiple proposed bounding boxes should not overlap and not be too large) for their reward \cite{zhu_active-o3_2025}.
We conclude that teaching the model to zoom-in either requires expensive full-trajectory SFT data or an extra tool-use reward to get the model to explore this new skill. As our method should be SFT-free, we now consider intrinsic exploration rewards.

\subsection{Intrinsic Exploration Rewards}
 Exploration is a long-standing but unsolved problem in RL, which boils down to the question how an agent should explore its environment to make new experiences \cite{ladosz_exploration_2022}. The stochastic nature of exploration may lead to new experiences (initially) being worse than established ones. This makes it plausible to use rewards for exploration which are independent of the external task reward. After exploration with intrinsic rewards one can continue to refine the policy with standard RL that is only guided by external rewards. Thus, we can view intrinsic reward-driven exploration as a self-supervised drop-in replacement for the warm-start SFT stage. Empowerment is an intrinsic reward which seeks states from which many other states can be reached, i.e. it `empowers' the agent by giving it more options in the future \cite{klyubin_all_2005, salge_empowerment_2013}. However, this objective can not be applied to LLMs directly, as they are trained without environment dynamics. \textit{Diversity Is All You Need} learns to separate skills and states \cite{eysenbach_diayn_19}. If we view each zoom-in region as a separate skill, their approach is rather similar to ours. However, they use a fixed skill distribution, whereas in our approach the model chooses the zoom-in region itself. The InfoNCE loss \cite{oord_representation_2019} is a contrastive method for unsupervised representation learning which works by estimating the mutual information within a sequence and has found applications in goal-conditioned RL \cite{zheng2024contrastive}. 

 \noindent Inspired by these mutual information-based methods, we develop our own intrinsic reward in Sec. \ref{subsec:method} for helping the model explore the zoom-in tool without the need for SFT data.



\subsection{Evaluation Data}

\paragraph{Limitations of zoom-in benchmarks}
\label{sec:benchmark-limitations}
\noindent When it comes to existing benchmarks to evaluate zoom-in, we found two issues. First, not all datasets benefit from the zoom-in operation, e.g. InfographicVQA \cite{mathew_info_vqa_2022} (see Sec. \ref{app:infovqa}). This is mainly related to the image size used (i.e. the image is too small or the RoI is too easy to detect). 
Second, datasets only provide insufficient annotations for the Region of Interest (RoI). Almost all datasets in Tab. \ref{tab:dataset-image-sizes} do not provide bounding boxes for the region of interest. $V^*$-Bench \cite{Wu_vstar_2024} is the exception, but 40\% of its samples are about spatial relationships between objects and thus contain two bounding boxes per image, which complicates zoom-in evaluation. The lack of RoI annotations confounds evaluation, because the model might just use the tool to get the additional reward, even if the zoom-in region is inexact. 
\paragraph{Synthetic Data}
\label{sec:related-work:synthetic-data}
As modern (multimodal) LLMs are trained on the whole internet and many existing real-world benchmarks are contaminated, synthetic data has become increasingly important to benchmark and analyze LLMs \cite{sainz-etal-2023-nlp,song_contamination_25, xu-etal-2025-dcr}. For context extension, the needle-in-a-haystack task is about prompting the model to find a unique phrase hidden in book-length texts \cite{kamradt_needle_2023}. This idea has been quickly extended by the community into language-only benchmarks \cite{hsieh_ruler_2024, yen2025helmet} and adopted for videos \cite{video_niah_25, zhao2025needle_video} and multi-image settings \cite{wang_needle_2024, wu2025visual}. For images, \citet{fan-etal-2024-muffin} and \citet{pawlowski_needles_2020} create datasets where they combine small images together in panels. However, they do not provide bounding box annotations and only use small panels of up to one megapixel in size. 

\noindent Thus, we create a framework for generating structured single-image haystacks that contain RoI annotations and can be easily scaled in image size and task complexity (Sec. \ref{sec:mc-dataset}). 

\section{Methodology}

\label{subsec:method}
In the following section, we first describe our method and then show its relation to the InfoNCE loss \cite{oord_representation_2019}.
\subsection{Method description}
If the model has used the tool in a trajectory, we can write the full token sequence as follows: \begin{align*}
    S = (Q, M_1^R, M_1^T, T_E, M_2),
\end{align*}
where $Q$ is the user query, $M_1^R$ is the first model reasoning, $M_1^T$ is the model tool call, $T_E$ is the tool execution (mostly the zoomed-in part of the image) and $M_2$ is the second model generation (including the answer). For a graphical depiction see Fig. \ref{fig:figure-1}. The goal of the method is to further encourage good tool calls $M_1^T$. If the model answer is correct, we can assume that $M_1^T$ was reasonable and treat it as a `silver' label for a good tool call. Based on the position of $M_1^T$, we
generate an `unreasonable' alternative tool call $M^{T'}_1$. Executing the tool with the new parameters yields an alternative image region $T'_E$. For this, we introduce a hyperparameter $\tau$ and generate $M^{T'}_1$ such that IoU\footnote{For a graphical depiction of the IoU (Intersection-over-Union) metric see Fig. \ref{fig:overlap-metrics}. The pseudocode to generate $M_1^{T'}$ is given in Algorithm \ref{app:alg:bbox}.App.}($T_E$, $T'_E$) = $\tau$. Now we have two prefix sequences $S_{\text{pre}} = (Q, M_1^R, M_1^T, T_E)$ and $S_{\text{pre}}' = (Q, M_1^R, M_1^{T'}, T'_E)$, where $S_{\text{pre}}$ is the true prefix for the second model generation $M_2$. To determine if the model can differentiate between $S_{\text{pre}}$ and $S_{\text{pre}}'$, we calculate \begin{align}
\label{formula:per-seq}
    \tilde{r}_{\text{tool}} =  \log\left(\frac{2\mathbb{P}(M_2|S_{\text{pre}})}{\mathbb{P}(M_2|S_{\text{pre}}) + \mathbb{P}(M_2|S_{\text{pre}}')}\right)
\end{align}
Note that we do not resample the second model generation $M_2'$ with prefix $S_{\text{pre}}'$, we only rescore the logits for the existing second generation $M_2$. This merely requires a forward pass of the model, which is much cheaper than a new generation. Further, it is important for training stability that the compared sequences $M_2$ and $M'_2$ have the same number of tokens, which is hard to enforce in a new generation. Equation (\ref{formula:per-seq}) can be interpreted as a Bradley-Terry model, such that the silver label tool call should be preferred to the alternative tool call \cite{bradley-terry-model-52, Rafailov-dpo-23}. As we show now, a deeper and more general interpretation is InfoNCE \cite{oord_representation_2019}.

\subsection{Interpretation as Contrastive Learning}
\label{sec:info-nce}
Applying Bayes' rule to the terms in (\ref{formula:per-seq}) yields \begin{align*}
    &\tilde{r}_{\text{tool}} = \\ & \log\left(\frac{2\frac{\mathbb{P}(M_1^T, T_E|Q, M_1^R, M_2)}{\mathbb{P}(M_1^T, T_E|Q, M_1^R)}}{\frac{\mathbb{P}(M_1^T, T_E|Q, M_1^R, M_2)}{\mathbb{P}(M_1^T, T_E|Q, M_1^R)} + \frac{\mathbb{P}(M_1^{T'}, T'_E|Q, M_1^R, M_2)}{\mathbb{P}(M_1^{T'}, T'_E|Q, M_1^R)}}\right)
\end{align*}
This is precisely the negative InfoNCE loss \cite{oord_representation_2019}, shifted by $\ln(2)$. As shown in their paper, $\exp(\tilde{r}_{\text{tool}})/2$ is equal to $\mathbb{P}(\{(M_1^T, T_E) = \text{positive}\}|\{\text{tool}\}, Q, M_1^R, M_2)$, i.e. the probability that $(M_1^T, T_E)$ is the positive sample in $\{\text{tool}\}$ relative to $M_2$, where $\{\text{tool}\}$ is a set of one positive sample and one negative sample. In other words, our reward measures whether the model can differentiate between tool uses based on the subsequent generation. The shift by $\ln(2)$ included in $\tilde{r}_{\text{tool}}$ is important, otherwise its upper bound would be zero. This is not desirable, because we want to encourage the exploration of the tool and not penalize it. Formally, 
 $\tilde{r}_{\text{tool}} \in (-\infty, \ln(2)]$ and $\tilde{r}_{\text{tool}} \geq 0 \iff \mathbb{P}(M_2|S_{\text{pre}}) \geq \mathbb{P}(M_2|S_{\text{pre}}')$, which should hold for most generations in practice.  

\noindent InfoNCE as an unsupervised method uses negatives that follow the data distribution. Indeed, if we would sample $(M_1^{T'}, T'_E)$ independently from $(Q, M_1^R)$, we would get $\tilde{r}_{\text{tool}} \leq MI((M_1^{T}, T_E); M_2|Q, M_1^R)$ as shown in \cite{oord_representation_2019}, i.e. maximizing our reward would maximize the (conditional) mutual information between tool use and subsequent generation. This is not desirable, as it would push the model towards generating diverse tool calls. In contrast, we construct the negatives ourselves, as this allows for fine-grained control over their difficulty by the hyperparameter $\tau$, continuously moving between easy ($\tau = 0$) and hard negatives ($\tau > 0$).
The inclusion of these hard negatives improved performance in contrastive learning setups \cite{hermans_defense_2017, robinson_hard_negatives_2021} and we ablate their usefulness in Sec. \ref{sec:ablations}.    
\subsection{Practical Considerations}

As $M_2$ consists of 40+ tokens, the sequence probabilities become very small, such that $\tilde{r}_{\text{tool}}$ has to be computed in fp32 (for an ablation see Sec. \ref{sec:ablations}). 
Instead, we compare token-level logits and sum them up. Formally, let $M_{2,i}$ denote the $i$-th token of $M_2$ and $M_{2,<i}$ the first $i-1$ tokens of $M_2$. Then set \begin{align*}
    &\tilde{r}_{\text{tool},i} = \\ &\resizebox{0.95\linewidth}{!}{$\displaystyle\log\left(\frac{2\mathbb{P}(M_{2,i}|S_{\text{pre}}, M_{2, <i})}{\mathbb{P}(M_{2,i}|S_{\text{pre}}, M_{2, <i}) + \mathbb{P}(M_{2,i}|S_{\text{pre}}', M_{2, <i})}\right)$}
\end{align*}
To mitigate that outliers dominate the reward, we clip per token and finally apply $\tanh$ to bound the sum. Thus we have
 \begin{align*}
    r_{\text{tool}} = \tanh\left(\sum_{i=1}^N \text{clip}(\tilde{r}_{\text{tool},i}, \pm \gamma)\right) \in [-1,1]
\end{align*} In Lemma \ref{lem:tanh-sum-out-in} we show that, for fixed sequence length $N$, maximizing the unclipped $r_{\text{tool}}$ maximizes $\tilde{r}_{\text{tool}}$, so it is a conservative surrogate.
Our final reward looks like \begin{align}
    r = r_{\text{acc}} + \alpha \cdot \mathbb{I}_{\{\text{answer correct}\}} \cdot r_{\text{tool}} \label{formula:final-reward}
\end{align}
where $\mathbb{I}_{\{\text{answer correct}\}}$ is one if the answer is correct, and zero otherwise. This can be seen as an extension of DeepEyes v1's reward, and setting $r_{tool} = \mathbb{I}_{\{\text{tool used}\}}$ recovers it \cite{zheng_deepeyes_2025}; we call this ablation \textit{Conditional} (see Sec. \ref{app:model-details}).

\section{Muffin and Chihuahua (M\&C) Dataset}
\begin{table*}[]
    \centering
    \resizebox{\linewidth}{!}{\begin{tabular}
    {l|c|rr|rr|rr|rr|rr}
     Dataset $\rightarrow$ &  &\multicolumn{2}{c|}{V-Star} & \multicolumn{2}{c|}{HRBench 4k} & \multicolumn{2}{c|}{HRBench 8k} & \multicolumn{2}{c|}{MME-RealWorld} & \multicolumn{2}{c}{Overall} \\
     Method$\downarrow$ Eval Setup $\rightarrow$ & Own  & tool-free & tool & tool-free & tool &  tool-free & tool & tool-free & tool & tool-free & tool \\
     \hline 
     Qwen 2.5 VL 7B & e & 72.25 & 15.71 & 57.75 & 10.38 & 50.88 & 9.75 & 49.38 & 16.77 & 57.57 & 13.15 \\
     \hline
     Pixel-Reasoner & e & 78.01 & 84.82 & 68.00 & 73.62 & 57.38 & 67.00 & 59.51 & 63.80 & 65.72 & 72.31 \\
     Mini o3 & - & - & 88.20 & - & 77.50 & - & 73.30 & - & 65.50 & - & 76.13 \\
      & e & - & 86.91 & - & 77.73 & - & 73.39 & - & 65.63 & - & 75.92 \\
     DeepEyes & - & - & 85.60 & - & 75.10 & - & 72.60 & - & - & - & - \\
     DeepEyes v2 & - & - & 81.80 & - & 77.90 & - & 73.80 & - & 64.90 & - & 74.60\\
     \hline
     no tool & t & 81.68 & - & 75.38 & - & 69.12 & - & 64.06 & - & 72.56 & -\\
     Curiosity & t & 84.82 & 86.91 & 76.75 & 76.62 & 68.88 & 70.25 & 63.00 & 63.58 & 73.36 & 74.34 \\
     Conditional & t & 84.29 & 84.29 & 76.75 & 78.00 & 69.12 & 72.12 & 63.70 & 64.89 & 73.47 & 74.83 \\
     \hline
     Ours & t & 85.34 & 85.34 & \textbf{78.50} & 78.50 & 67.62 & 73.38 & 63.45 & 64.38 & 73.73 & 75.40 \\
     \hline 
     no tool+ & t & 79.06& -& 73.38& -& 69.75& -& 65.17& -& 71.84& - \\
     Curiosity+ & t & 83.25 &87.96& 77.00& 76.50& \textbf{70.38}& \textbf{74.12}& 64.56& 66.25& 73.80& 76.21 \\
     Conditional+ & t &  \textbf{86.39} & 87.96 & 75.12 & 76.38 & 69.75 & 72.75 & 65.29 & 66.56 & \textbf{74.14} & 75.91 \\
     \hline
     Ours+ & t & 85.34 & \textbf{89.01} & 76.25 & \textbf{79.12} & 68.50 & 73.00 & \textbf{65.79} & \textbf{67.10} & 73.97 & \textbf{77.06}
    \end{tabular}}
    \caption{Main results on general benchmarks. In column `Own' we denote what we did ourselves (t: training and evaluation, e: only evaluation, -: adoption from their paper). `Overall' is the arithmetic mean of the results on the four datasets. We see that Ours+ is the best method, followed by Curiosity+ and Mini o3 \cite{lai_mini-o3_2025}.}
    \label{tab:main-results-auto}
\end{table*}
\label{sec:mc-dataset}
We draw upon the fact that the snout and eyes of chihuahua dogs and the blueberries in a muffin look very similar from a distance to create the synthetic Muffin\&Chihuahua (M\&C) dataset for stress-testing model's zoom-in ability. This dataset is similar to needle-in-a-haystack tasks (see Sec. \ref{sec:related-work:synthetic-data}) and is suitable for sharp diagnostics, especially as we have gold region of interest annotations to assess tool use quality independent of downstream performance. By looking at the sample in Fig. \ref{fig:sample-muffin} it is intuitive to understand the dataset construction. We start with a square image of side length \textit{image size} pixels, which contains a grid of \textit{grid size}$\times$\textit{grid size} cells. The cells are numbered from left to right and top to bottom and each of them shows an image of a muffin or chihuahua. We introduce the tasks \textit{single cell query} and \textit{find outlier}. In \textit{single cell query}, the muffin/chihuahua label distribution is balanced. Prompted for a cell number, the model must decide whether a muffin or chihuahua is present in the cell. This task is very similar to the prompts in existing datasets, which often ask for a small detail in a big, crowded image. A single zoom-in operation should be sufficient to solve it, if it focuses on the correct region. In \textit{find outlier}, there is only a single muffin present in the grid and the model should answer with the cell number that contains it. This task appears to be much harder and solving it likely requires many zoom-in operations.
Together, the M\&C dataset consists of 3600 samples in 36 splits, each consisting of 100 images across the cartesian product along the axes Image Size ($1024$, $2048$, $4096$, $8192$), Grid Size ($1\times1$, $2\times2$, $4\times4$, $8\times8$, $16\times16$) and Task (Single Cell Query (scq), Find Outlier\footnote{We do not consider the \textit{find outlier} task for the 1x1 grid, because it is not meaningful.} (fo))
A comparison of image sizes across datasets can be found in Tab. \ref{tab:dataset-image-sizes}. As we see, the created datasets complement the existing ones well in image size. For details of the synthetic data generation process, see Sec. \ref{app:synth-dataset}.
\section{Experiments}
\begin{table*}[]
    \centering
    \resizebox{\linewidth}{!}{\begin{tabular}{l|rr|rr|rr|rr|rr|rr}
Grid size & \multicolumn{2}{c|}{1x1} & \multicolumn{2}{c|}{2x2} & \multicolumn{2}{c|}{4x4} & \multicolumn{2}{c|}{8x8} & \multicolumn{2}{c|}{16x16} & \multicolumn{2}{c}{Overall} \\
Eval Setup & tool-free & tool & tool-free & tool & tool-free & tool & tool-free & tool & tool-free & tool & tool-free & tool \\
\hline
Qwen 2.5 VL 7B & \textbf{89.50} & - & 86.50 & - & 83.50 & - & 69.25 & - & 57.75 & - & 77.30 & - \\
\hline
Pixel-Reasoner & 86.50 & 89.50 & 87.50 & 88.75 & 84.25 & 84.25 & 66.50 & 65.50 & 57.50 & 55.00 & 76.45 & 76.60 \\
Mini o3 & - & 91.31 & - & 90.94 & - & 88.81 & - & 71.38 & - & 57.38 & - & 79.96  \\
\hline
no tool & 83.50 & - & 90.25 & - & 87.75 & - & 75.00& - & 58.50 & - & 79.00 & - \\
Curiosity & 88.50 & \textbf{93.00} & 88.50 & 89.50 & 82.75 & 87.00 & 66.75 & 69.25 & 57.00 & 54.50 & 76.70 & 78.65 \\
Conditional 	&85.50	&79.00	&89.75	&95.00&86.25	&\textbf{95.25}	&67.25	&\textbf{88.50}	&56.25	&75.25	&77.00	& \textbf{86.60}\\
Ours & 87.25 & 76.00 & 87.50 & 94.50 & 87.75 & 93.50 & 72.75 & 83.00 & 58.25 & 71.50 & 78.70 & 83.70 \\
\hline
no tool+ 	&72.75	&-	&91.50	&-	&\textbf{91.75}	&-	&\textbf{81.75}&-	&\textbf{63.75}	&-	&\textbf{80.30}	&- \\

Curiosity+ 	&82.25	&85.75	&90.75	&93.25	&86.00	&92.50	&75.50	&78.75	&58.00	&66.25	&78.50	&83.30 \\
Conditional+ 	&73.50	&74.25	&\textbf{92.75}	&\textbf{95.50}&87.75	&94.75	&72.00	&82.25	&56.75	&77.50	&76.55	& 84.85 \\
Ours+ & 79.00 & 74.25 & 89.50 & 94.25 & 86.00 & 93.50 & 72.25 & 86.75 & 58.75 & \textbf{78.25} & 77.10 & 85.40

\end{tabular}}
    \caption{Results on the \textit{single cell query} task of the M\&C dataset. For each grid, we average over image sizes. With an increasing grid size, performance steadily moves toward the 50\% of the random baseline. \textit{No tool} is a strong baseline and only \textit{Conditional} and our models consistently outperform it.}
    \label{tab:mc-scq-auto}
\end{table*}

\subsection{Training Setup}
\label{sec:train-setup}
We continue to train Qwen 2.5 VL 7B \cite{bai_qwen25-vl_2025} using GRPO \cite{shao_deepseekmath_2024}. To tackle the vanishing advantages problem we use selective sample replay \cite{wang_vl-rethinker_2025}. We perform almost on-policy RL, i.e. every other update step the buffer is flushed and is refilled by trajectories generated by the new policy. We use the PR\textbackslash video dataset for the main training runs. It is the RL dataset from Pixel-Reasoner \cite{su_pixel_2025}, which contains queries from InfographicVQA (train) \cite{mathew_info_vqa_2022} and Llava-CoT \cite{xu_llava_cot_2025}. We remove queries with multiple input images (including video queries) to obtain 6698 queries in total. For more details and training hyperparameters see Sec. \ref{sec:technical}. 

\subsection{Evaluation}
Tab. \ref{tab:main-results-auto} shows the model evaluation on the general VQA benchmarks HRBench 4k \& 8k \cite{wang_hrbench_2025}, $V^*$-Bench \cite{Wu_vstar_2024} and MME-RealWorld \cite{zhang_mme_realworld_2025}. More details for these benchmarks can be found in Sec. \ref{app:general-benchmarks}. The \textit{tool-free} evaluation mode tests the model's single-turn ability. For it, tools are not mentioned in the system prompt, and they are neither parsed nor executed if generated. Evaluations on our M\&C dataset can be found in Tab. \ref{tab:mc-scq-auto} for the \textit{single cell query} task and in Tab. \ref{tab:mc-fo} for \textit{find outlier}. 
Additionally, in Tab. \ref{tab:mc-overlap-metrics-auto}, we use M\&C's detailed RoI annotations to compare tool call quality by image overlap metrics (cf. Fig. \ref{fig:overlap-metrics}) and analyse their correlation with task performance.


\subsection{Baselines and Models}
We train three baselines and our own approach on the PR\textbackslash video dataset with GRPO, starting from Qwen2.5 VL 7B \cite{bai_qwen25-vl_2025}. \textit{Ours} uses the curriculum of negatives from Fig. \ref{fig:iou-target}. 
The \textit{no tool} model has no access to tools and \textit{Curiosity} is a retrained Pixel-Reasoner model with absolute bounding boxes and no warm-start SFT stage. \textit{Conditional} uses a DeepEyes-style constant tool use reward that is conditional on a correct answer. The '+'-variants are obtained by continually training models on \textit{Visual Probe} (train) \cite{lai_mini-o3_2025} without any additional reward. This matches the Mini o3 training setup, except that we replace their SFT by an RL-exploration phase.
Model details are in Sec. \ref{app:model-details} and curriculum ablations are in Sec. \ref{sec:ablations}.
We compare against Pixel-Reasoner \cite{su_pixel_2025}, Mini o3 \cite{lai_mini-o3_2025}, DeepEyes \cite{zheng_deepeyes_2025} and DeepEyes v2 \cite{hong_deepeyesv2_2025}.




\section{Results}
\begin{table*}[]
    \centering
    \resizebox{\linewidth}{!}{\begin{tabular}{l|rr|rr|rr|rr|rr}
Grid size & \multicolumn{2}{c|}{2x2} & \multicolumn{2}{c|}{4x4} & \multicolumn{2}{c|}{8x8} & \multicolumn{2}{c|}{16x16} & \multicolumn{2}{c}{Overall} \\
Eval setup & tool-free & tool & tool-free & tool & tool-free & tool & tool-free & tool & tool-free & tool \\
\hline
Qwen 2.5 VL 7B & 90.25 & - & 76.00 & - & 56.75 & - & 1.75 & - & 56.19 & - \\
\hline
Pixel-Reasoner & 89.75 & 88.25 & 76.50 & 86.25 & 47.50 & 50.50 & 4.25 & 13.00 & 54.50 & 59.50 \\
Mini o3 & - & 88.81 & - & 87.56 & - & 55.31 & - & 19.88 & - & 62.89 \\
\hline
no tool & 92.25 & - & 83.75 & - & 61.00 & - & \textbf{18.25} & - & 63.81 & - \\
Curiosity & 90.25 & 88.50 & 83.25 & 91.50 & 51.50 & 59.75 & 11.50 & 15.75 & 59.12 & 63.88 \\
Conditional & 90.50 & 91.50 & 83.25 & 90.00 & 38.75 & 69.50 & 4.50 & 32.00 & 54.25 & 70.75 \\
Ours & 90.50 & 92.50 & 79.75 & 85.50 & 59.00 & 59.75 & 13.25 & 14.25 & 60.62 & 63.00 \\ \hline
no tool+ & \textbf{93.50} & - & \textbf{91.75} & - & \textbf{66.75} & - & 16.00 & - & \textbf{67.00} & - \\
Curiosity+ & 90.25 & 90.50 & 88.25 & 89.75 & 55.75 & 66.50 & 14.75 & 26.25 & 62.25 & 68.25 \\
Conditional+ & 93.25 & \textbf{96.25} & 87.50 & \textbf{94.75} & 63.25 & \textbf{72.50} & 7.75 & \textbf{34.00} & 62.94 & \textbf{74.38} \\
Ours+ & 88.25 & 91.00 & 85.25 & 90.50 & 51.50 & 63.75 & 7.50 & 24.25 & 58.12 & 67.38 \\
\bottomrule
\end{tabular}}
    \caption{Results on the \textit{find outlier} task of the M\&C dataset. For each grid, we average over image sizes. Given the complexity of the task, models are doing better than expected, but only the \textit{Conditional} baselines manage to use the tool in a way that gives an advantage over tool-free evaluation.}
    \label{tab:mc-fo}
\end{table*}


\subsection{General Benchmarks}
\label{sec:general-benchmarks}
Look at the overall benchmark results in Tab. \ref{tab:main-results-auto}. All models manage to surpass the base model by at least 15 accuracy points, which shows that RL training is helpful here. \textit{Ours+} is the strongest model showing that our intrinsic reward can indeed serve as a drop-in replacement for SFT. Even before this additional training stage, \textit{Ours} is also quite competitive and is only outperformed by Mini o3.

\paragraph{Zoom-In vs. no Zoom-In}
\label{para:zoom-vs-no-zoom}
An interesting angle is obtained by having a closer look at the \textit{no tool} setups. First, it should be noted that \textit{no tool} training yields surprisingly strong results. That means $80\%$  of previous work's improvements over the base model (i.e. 15 of 18.6 absolute points) are not because of the tool use, but can be explained by further optimizing the model on this specific task. Second, we see that the \textit{no tool} baseline outperforms Pixel-Reasoner with tool. Third, if a model trained with tool use is evaluated \textit{tool-free}, i.e. without the option to use a tool, the results are better than the \textit{no tool} model, which never learnt to use a tool during training. The observation that the models do not get worse in \textit{tool-free} evaluation implies that they do not become dependent on tool use and function well without it. To explain the observation that they are even better than \textit{no tool}, we hypothesize that the models trained with tool use have seen longer trajectories with (self-created) spatial grounding than in \textit{no tool} and they got distilled into their parameters. However, this observation does not generalize to other model families (see Sec. \ref{app:sec:cross-model}).
\paragraph{Continual training}
With the '+'-runs we test the hypothesis of RL exploration to serve as a drop-in replacement for SFT. For this, we continually train on the Visual Probe (train) dataset \cite{lai_mini-o3_2025} by only using accuracy reward. After an initial SFT stage, Mini o3 was trained exactly in that manner, which leads to a meaningful comparison. Visual Probe is a hard dataset, so the \textit{no tool} baseline can not benefit from it (Tab. \ref{tab:main-results-auto}). Additionally, we continued training the SFT-only PixelReasoner-WarmStart checkpoint. After an initial stable phase with tool use rate of 60-90\% and tool success rate of 85\%, they start to decline after 35\% of training until they reach almost zero, which is in line with \cite{su_pixel_2025}'s observations. For the RL exploration models Curiosity, Conditional and Ours, the continual training is beneficial for tool-free and tool performance and increases the latter by 1-2 points. This leads Ours+ to outperform Mini o3 and Curiosity+ to become on par with it. 
\paragraph{Efficiency}
Mini o3 \cite{lai_mini-o3_2025} was trained with a maximum of six tool uses and they show that it greatly generalizes if more tool uses are allowed during inference. This is an exciting result although it raises the question of efficiency on standard benchmarks. Looking at Fig. \ref{fig:efficiency} we see that, for six tool uses, Mini o3 is just slightly better than the \textit{no tool} baseline and it takes 16 tool uses to overtake Ours which can call the tool only once. Ours+, which was continually trained on the same data as Mini o3, is pareto-optimal. To estimate the training cost of tool usage, look at Tab. \ref{tab:runtimes}.
\paragraph{Cross-Model Analysis}
To test whether our findings depend on the base model, we retrain our model and baselines on InternVL3.5-8B \cite{wang_internvl35_2025} and Gemma 4 E4B \cite{team_gemma_2026}. Technical details are in Sec. \ref{app:sec:cross-model} and the results in Tab. \ref{app:tab:internvl} resp. Tab. \ref{app:tab:gemma}.
We had to construct negatives by re-executing the tool while keeping the tool call fixed, because we observed that InternVL and Gemma discriminated the two prefixes by the digits of the bounding box rather than by its content. Formally under the notation of Sec. \ref{subsec:method}, $S_{pre}' := (Q, M_1^R, M_1^T, T'_E)$.
On InternVL, the performance of our method is subpar after the exploration stage, but it strongly benefits from the continual training such that it ranks first at the end. On Gemma, all models do not perform very well in absolute terms. Our model's performance is subpar after the exploration stage and due to resource constraints we could not complete the continual training runs.
Finally, tool calling having a positive effect on tool free evaluation (see Sec. \ref{para:zoom-vs-no-zoom}) does not replicate. On InternVL and Gemma, every tool-trained model is worse than the \textit{no tool} baseline under tool-free evaluation, so that observation should be read as specific to Qwen2.5-VL rather than as a general property of tool-use training.
\subsection{Muffin\&Chihuahua Dataset}
\begin{table*}[h]
    \centering
    \resizebox{\linewidth}{!}{
    
\begin{tabular}{l|c|cc|cc|cc}
Metric & Accuracy & \multicolumn{2}{c|}{Precision} & \multicolumn{2}{c|}{Recall} & \multicolumn{2}{c}{IoU}  \\
 & Value & Value & Pearson & Value & Pearson  & Value & Pearson  \\
\hline 
Pixel-Reasoner & 73.38 & 13.26 \tiny $\pm$ 4.32 & 0.63 &  13.61 \tiny $\pm$ 7.26 & 0.62 & 9.91 \tiny $\pm$ 5.39 & 0.63 \\
Mini o3 & 77.12 & \textbf{45.11} \tiny $\pm$ 24.18 & 0.92 & 64.60 \tiny $\pm$ 34.65 & 0.96 & \textbf{43.50} \tiny $\pm$ 24.08 & 0.91  \\
\hline 
Curiosity & 75.06 & 28.26 \tiny $\pm$ 14.42 & 0.81 & 48.57 \tiny $\pm$ 29.49 & 0.93 & 25.34 \tiny $\pm$ 12.86 & 0.84  \\
Curiosity+ & 82.69 & 32.65 \tiny $\pm$ 10.35 & 0.86 & 73.85 \tiny $\pm$ 27.59 & 0.96 & 31.97 \tiny $\pm$ 9.68 & 0.87 \\
Conditional & \textbf{88.50} & 33.80 \tiny $\pm$ 5.23 & 0.81 & 90.41 \tiny $\pm$ 18.51 & 0.94 & 33.66 \tiny $\pm$ 5.27 & 0.81 \\
Conditional+ & 87.50 & 34.54 \tiny $\pm$ 5.36 & 0.91 & \textbf{95.06} \tiny $\pm$ 14.42 & 0.86 & 34.39 \tiny $\pm$ 5.47 & 0.90 \\
Ours & 85.62 & 33.94 \tiny $\pm$ 5.86 & 0.86 & 86.78 \tiny $\pm$ 21.57 &  0.88  & 33.81 \tiny $\pm$ 5.95 & 0.86 \\
Ours+ & 88.19 & 33.62 \tiny $\pm$ 5.38 & 0.85 & 93.12 \tiny $\pm$ 16.37 & 0.90 & 33.55 \tiny $\pm$ 5.48 & 0.85  \\
\hline 
Cross-model Pearson & - & 0.43 & - & 0.93 & - & 0.53 & -  \\
\end{tabular}}
\caption{Different overlap metrics to measure the quality of the zoom-in operation and their correlation to overall task performance on \textit{single cell query} on samples that actually need a tool call (i.e. without 1x1 grids). The column Pearson denotes correlation per-model across different splits of M\&C. The row Cross-model Pearson shows which metric is best to rank model's final performance, which turns out to be Recall.}
\label{tab:mc-overlap-metrics-auto}
\end{table*}
\paragraph{Single Cell Query} 
In Tab. \ref{tab:mc-scq-auto} we see the results for the \textit{single cell query} task, where we prompt a cell number and the model has to decide if the cell contains a muffin or chihuahua. 
\textit{Conditional} performs best on this task, followed by our models. Interestingly, \textit{Conditional+}'s continual training is harmful on this task. Again, we observe a very strong \textit{no tool} baseline, which performs better than Curiosity and Pixel-Reasoner with tool. 
As expected, the performance drops with increasing grid size and on the 16x16 grid, many models struggle a lot (keep in mind that the random baseline is 50\% for this task). The 1x1 grid constitutes a special case, because the Region of Interest is the whole image, so there should be no tool use needed. Our models have a hard time dealing with that, which we analyse in Sec. \ref{app:sec:excessive-too-use}. 
\paragraph{Find Outlier} 
For the \textit{find outlier} task (Tab. \ref{tab:mc-fo}), the model has to return the cell of the single muffin present in the grid of chihuahuas.
Given that guessing is very hard for this task (random baselines are at 25\%, 6.3\%, 1.6\% and 0.4\% respectively), models are doing much better than expected. But in general, only the Conditional baselines can use the tool well enough to gain an advantage compared to tool-free evaluation. Solving \textit{find outlier} with a single tool use on larger grids is very hard from a human perspective, but even Mini o3 with 32 tool uses apparently does not manage to navigate the maze in a meaningful way. See Tab. \ref{tab:mini-o3-no-answer} for the number of times Mini o3 fails to answer. 

\paragraph{Single Zoom vs. Multi Zoom}
Our single-zoom model being Pareto-optimal on standard benchmarks could have been explained as an artifact of the task setup.
However, based on the observations on the \textit{find outlier} task we see that this is not the case. In general, additional zooms only help if they enable comparing regions or correcting earlier mistakes. On $V^*$, 40\% of the questions ask for the former, whereas \textit{find outlier} would benefit heavily from the latter. As our model outperforms multi-zoom approaches on both, we conclude that this generation and size of models is not capable enough for meaningful multi-step zooming. Future model generations or bigger model sizes may be able to realize multi-step visual search.  
\paragraph{Tool-Use quality assessment}
We use M\&C dataset's RoI annotations in Tab. \ref{tab:mc-overlap-metrics-auto} to analyze zoom-in performance directly. 
We correlate task performance with tool use quality, i.e. how close the last zoom-in region of the trajectory came to the ground truth. For this, we use mean and standard deviation of the overlap metrics \textit{precision}, \textit{recall} and \textit{IoU} (Fig.~\ref{fig:overlap-metrics}) and calculate correlation on split level per-model as well as across models.
Mini o3 achieves the highest mean precision, but at the cost of a high standard deviation. On the other hand, our models and the \textit{Conditional} baseline excel at recall, having a high mean while maintaining a comparably low standard deviation. Looking at the correlations, we see that recall is most indicative for final performance, followed by IoU. We conclude that zoom-in is most effective when it manages to reliably cover the target area. Given that tool usage should be treated as an expensive operation, zoom-in onto large patches makes sense, as failing to cover the area of interest is more harmful than zooming in onto a region that is too big. 
\section{Conclusion}
In this work we looked at MLLMs using the zoom-in tool. Inspired by mutual information maximization techniques, we proposed a SFT-free Reinforcement Learning method to teach a model how to zoom. When used as a drop-in replacement for SFT, the resulting model \textit{Ours+} achieves SoTA performance within its model class on the VQA benchmarks $V^*$, \textit{HRBench 4k} and \textit{MME-RealWorld}, which transfers from Qwen2.5 to InternVL 3.5. In particular, our model only uses the tool a single time and is thus Pareto-optimal.
To test zoom-in abilities directly, we further introduce the novel needle-in-a-haystack-style Muffin\&Chihuahua (M\&C) dataset, which contains 36 splits across grid and image size and is easily extendable to challenge larger models. Our evaluation finds that most zoom-in models from prior work struggle to use their tool effectively on M\&C. Finally, by leveraging M\&C's annotations we find that models perform best when their zoom-in area reliably covers the Region of Interest. 
All in all, our work paves the way for more self-reliant tool learning without the need for expensive SFT data. 
\section{Limitations}
\label{sec:limitations}
\paragraph{Failure modes of MI-based rewards} Using a mutual information-based approach, the second model turn $M_2$ is simultaneously action and probe for the action quality. This makes the model prone to falling into a vicious cycle, where, by chance, it produces a post-tool reasoning $M_2$ that gives high additional reward, which leads to GRPO increasing the likelihood of $M_2$. There are several reasons for producing a high-reward $M_2$, only one of which is a good previous tool call. Others include (1) the length of $M_2$, (2) high lexical overlap between $M_1$ and $M_2$ and (3) producing very round tool use pixel values. The first two are discussed in the next paragraph and the latter in Sec. \ref{app:sec:cross-model}, as it only happened in InternVL and Gemma models. One way to break this cycle is to not give additional reward to $M_2$'s tokens, which we leave for future work. Instead, we introduced a curriculum of hard negatives to mitigate these issues.    
\paragraph{Hyperparameter sensitivity} Our method introduces a curriculum with two design parameters, $t_{\text{easy}}$ and $\tau_{max}$. The first determines how long we create easy negatives during training, the second how difficult the hard negatives will be at the end of training. We ablate $\tau_{max}$ in Tab. \ref{tab:ablations} and find a corridor of good values in $[0.15, 0.2]$, with $\tau_{max} = 0.175$ best, which also transfers to a different model family (see Sec. \ref{app:sec:cross-model}). When only easy negatives are used ($\tau_{max} = 0 \iff t_{\text{easy}} = 1$), the model shows reward hacking modes (1) and (2). A qualitative example is in Sec. \ref{app:sec:qual-hacking} and training dynamics are in Fig. \ref{fig:ablations}. For Gemma, we had to go to the other extreme and set $t_{easy} = 0$ to get meaningful results (Sec. \ref{app:sec:cross-model}).
\paragraph{Base model limits} The method depends on the dataset being easy enough such that initially the model gets the answer correct from time to time even while using the tool, otherwise it will never pick up its use reliably. But this problem, i.e. only increasing likelihood of behaviour, but not discovering fundamentally new behaviour, applies to any DeepEyes-style conditional reward, and, more broadly, to GRPO as a whole \cite{wu25_invisible_leash}.

\section{Ethical considerations}
Training LLMs with Reinforcement Learning might have an adversarial effect on their instruction-following ability. Before our model should be used in production, another round of instruction-tuning should be conducted to decrease harmful generations.\\ We obtained the base images for the M\&C dataset from \url{https://www.topbots.com/downloads/code/vision/chihuahua\_vs\_muffin/}. As we could not find their licenses, we will provide a script for re-downloading them to reproduce M\&C as part of our open-sourced code. The download links for individual images including Internet Archive's wayback machine backup links are in Sec. \ref{sec:mc-base-image-links}.

\section*{Acknowledgements}
FH was funded by the German Federal Ministry of Education and Research (BMBF) within the hessian.AI Service Center. Further, FH was funded by the European Union, the Federal Ministry of Research, Technology and Space, and the Hessian Ministry of Science and Arts within the Jupyter AI Factory (BMFTR: 16HPC132; euroHPC: 101250682). For model training, we gratefully acknowledge support from the hessian.AI Service Center (funded by the Federal Ministry of Research, Technology and Space, BMFTR, grant no. 16IS22091) and the hessian.AI Innovation Lab (funded by the Hessian Ministry for Digital Strategy and Innovation, grant no. S-DIW04/0013/003).

Thanks to Fajri Koto for coming up with the idea of using synthetic annotated data. Thanks to Imbesat Rizvi, Nico Daheim, Kurt Micallef, Hector Garcia and four anonymous ARR reviewers for helpful feedback. 

\bibliography{custom}

\appendix
\renewcommand{\thetable}{\arabic{table}.App}
\renewcommand{\thefigure}{\arabic{figure}.App}

\section{Proofs}
\begin{figure*}[t]
\centering

\begin{tikzpicture}
  \draw[step=0.5cm, gray!60, thin] (0,0) grid (3,3);
  \draw[very thick] (0,0) rectangle (3,3);
  \fill[green!20] (1.5,1.5) rectangle (2.5,2.5);
  \fill[blue!20, opacity=0.5] (1,1) rectangle (3,3);
  \node[draw,align=left] at (1.5,-1.1) {$\begin{aligned}
      \text{Precision: }& 0.25 = 4/16  \\
      \text{Recall: }&    1.00 = 4/4 \\
      \text{IoU: }&       0.25 = 4/16
  \end{aligned}$};

  \draw[step=0.5cm, gray!60, thin] (4,0) grid (7,3);
  \draw[very thick] (4,0) rectangle (7,3);
  \fill[green!20] (5.5,1.5) rectangle (6.5,2.5);
  \fill[blue!20, opacity=0.5] (5.5,1.5) rectangle (6,2);
  \node[draw,align=left] at (5.5,-1.1) {$\begin{aligned}
      \text{Precision: }& 1.00 = 1/1  \\
      \text{Recall: }&    0.25 = 1/4 \\
      \text{IoU: }&       0.25 = 1/4
  \end{aligned}$};

  \draw[step=0.5cm, gray!60, thin] (8,0) grid (11,3);
  \draw[very thick] (8,0) rectangle (11,3);
  \fill[green!20] (9.5,1.5) rectangle (10.5,2.5);
  \fill[blue!20, opacity=0.5] (9,1) rectangle (10,2);
  \node[draw,align=left] at (9.5,-1.1) {$\begin{aligned}
      \text{Precision: }& 0.25 = 1/4  \\
      \text{Recall: }&    0.25 = 1/4 \\
      \text{IoU: }&       0.14 \approx 1/7
  \end{aligned}$};
\end{tikzpicture}
\caption{Graphical examples of overlap metrics. Given a \colorbox{lime}{target area} $A_{t}$ and a \textcolor{cyan}{prediction area} $A_{p}$, we define precision as $\frac{|A_{t} \cap A_{p}|}{|A_p|}$, recall as $\frac{|A_{t} \cap A_{p}|}{|A_t|}$ and IoU (Intersection-over-Union) as $\frac{|A_{t} \cap A_{p}|}{|A_t \cup A_p|}$. Example 1 and 2 have the same IoU but very different precision resp. recall.}
\label{fig:overlap-metrics}
\end{figure*}
For the following lemmas, let $N \in \mathbb{N}$, $a > 0$, $0 \leq p_i \leq 1$, $0 \leq p'_i \leq 1$  and $p = \prod_{i=1}^N p_i, p' = \prod_{i=1}^N p'_i$ such that $p+p' >0 $.
\begin{lemma}
\label{lem:sum-out-in}
\begin{align*}
    &\sum_{i=1}^N \log\left(\frac{ap_i}{p_i + p'_i}\right) \\ &\leq (N-1)\cdot\log(a) + \log\left(\frac{ap}{p+p'}\right)
\end{align*}
\end{lemma}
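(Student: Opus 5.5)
The plan is to subtract $N\log(a)$ from both sides and exponentiate. This turns the statement into a purely multiplicative inequality about the $p_i, p'_i$.

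First, I would check that every term is well defined. Since $p+p'>0$, at least one of the products $p=\prod_i p_i$ and $p'=\prod_i p'_i$ is positive. If $p>0$, then every $p_i>0$; if $p'>0$, then every $p'_i>0$. Either way, $p_i+p'_i>0$ for all $i$, so no denominator vanishes.

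If $p=0$, some $p_i=0$. Then with the convention $\log 0=-\infty$ the left-hand side is $-\infty$, and the inequality holds trivially. So I would assume $p>0$, hence all $p_i>0$.

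Next, I would rewrite both sides. The left-hand side equals $N\log(a)+\sum_i\log\frac{p_i}{p_i+p'_i}$. The right-hand side equals $(N-1)\log(a)+\log(a)+\log\frac{p}{p+p'}=N\log(a)+\log\frac{p}{p+p'}$. The $a$-terms cancel exactly, so it suffices to prove
\begin{align*}
  \prod_{i=1}^N \frac{p_i}{p_i+p'_i} \leq \frac{p}{p+p'}.
\end{align*}
Because the numerators agree ($\prod_i p_i=p$) and $p>0$, this is equivalent to
\begin{align*}
  \prod_{i=1}^N (p_i+p'_i) \geq \prod_{i=1}^N p_i + \prod_{i=1}^N p'_i .
\end{align*}

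The key step is to expand $\prod_{i=1}^N (p_i+p'_i)$ as a sum over all $2^N$ choices of $p_i$ or $p'_i$ in each factor. Two of these terms are exactly $\prod_i p_i$ (all unprimed) and $\prod_i p'_i$ (all primed). When $N\ge 2$, these are distinct terms, and all remaining cross terms are nonnegative because $p_i,p'_i\ge 0$, which gives the inequality. The case $N=1$ holds with equality. Alternatively, I could run a short induction on $N$ using $(x+y)(u+v)\ge xu+yv$ for nonnegative reals.

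I do not expect a real obstacle here. The only delicate point is the bookkeeping of degenerate cases: the $p=0$ case and the $N=1$ case, where the all-primed and all-unprimed expansion terms are the only two terms. The bound $p_i\le 1$ is not actually needed.
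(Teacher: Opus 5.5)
Your proposal is correct and follows essentially the same route as the paper. Both arguments pull $N\log(a)$ out of the sum, expand $\prod_{i}(p_i+p'_i)$ into $p+p'$ plus the mixed terms indexed by $0<|S|<N$, and drop those nonnegative cross terms. Your explicit handling of the degenerate cases ($p=0$, $N=1$) and your claim that the cross-term sum is ``nonnegative'', where the paper says ``$>0$'', are slightly more precise than the paper's write-up.
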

\begin{proof}
Starting from the left side, pull $a$ out of the log and put the sum in \begin{align*}
    &\sum_{i=1}^N \log\left(\frac{ap_i}{p_i + p'_i}\right) \\= &N \log(a) + \log\left(\frac{p}{\prod_{i=1}^N(p_i + p'_i)}\right) \intertext{factorize} = &N \log(a) \\ &+ \resizebox{0.9\linewidth}{!}{$\displaystyle\log\left(\frac{p}{p + p' + \sum_{\substack{S \subseteq \{1, \ldots, N\} \\ 0<|S| <N}}\prod_{i \in S} p_i \prod_{i \notin S} p'_i}\right)$} \intertext{the sum is $>0$} \leq 
    &N\cdot\log(a) + \log\left(\frac{p}{p+p'}\right), 
\end{align*} now moving a single $a$ to the right summand yields the claim.
\end{proof}
\begin{lemma} \label{lem:tanh-sum-out-tanh-in} Let $a \geq 1$. Then \begin{align*}
    &\tanh\left(\sum_{i=1}^N \log\left(\frac{ap_i}{p_i + p'_i}\right)\right) \\ \leq& (N-1)\cdot\log(a) + \tanh\left(\log\left(\frac{ap}{p+p'}\right)\right)
\end{align*}
\begin{proof}
    Follows from Lemma \ref{lem:sum-out-in}, the monotonicity of $\tanh$ and the fact that $\tanh(x+y) - \tanh(y) \leq x $ for $x \geq 0$, because $\tanh$ is $1$-Lipschitz.
\end{proof}
\end{lemma} 
\begin{lemma} \label{lem:tanh-sum-out-in} Let $a\geq 2$ and $p \geq p'$. Then \begin{align*}
    &\tanh\left(\sum_{i=1}^N \log\left(\frac{ap_i}{p_i + p'_i}\right)\right) \\ \leq& (N-1)\cdot\log(a) + \log\left(\frac{ap}{p+p'}\right)
\end{align*}
\end{lemma}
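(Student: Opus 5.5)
Our plan is to obtain the claim from Lemma \ref{lem:tanh-sum-out-tanh-in}; the only extra ingredient is the comparison $\tanh(y) \leq y$ for $y \geq 0$. Set $y := \log\left(\frac{ap}{p+p'}\right)$. Since $a \geq 2 \geq 1$, Lemma \ref{lem:tanh-sum-out-tanh-in} applies and gives
\begin{align*}
    \tanh\left(\sum_{i=1}^N \log\left(\frac{ap_i}{p_i + p'_i}\right)\right) \leq (N-1)\cdot\log(a) + \tanh(y).
\end{align*}
It therefore suffices to show $\tanh(y) \leq y$.

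First we check the sign of $y$. The assumption $p \geq p'$ gives $p + p' \leq 2p$. Together with $p+p'>0$, this forces $p>0$, so $y$ is well defined (not $-\infty$). Hence
\begin{align*}
    \frac{ap}{p+p'} \geq \frac{ap}{2p} = \frac{a}{2} \geq 1,
\end{align*}
where the last step uses $a \geq 2$. So $y \geq 0$.

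Next, $\tanh(y) \leq y$ holds for all $y \geq 0$. The function $g(y) = y - \tanh(y)$ satisfies $g(0)=0$ and $g'(y) = 1 - \operatorname{sech}^2(y) = \tanh^2(y) \geq 0$, so $g$ is nondecreasing and nonnegative on $[0,\infty)$. Substituting this into the display above gives the statement.

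The hypotheses $a\geq 2$ and $p\geq p'$ exist only to guarantee $y \geq 0$. For negative $y$ the inequality $\tanh(y)\leq y$ fails, so this is the one step needing care. We expect no further obstacle.
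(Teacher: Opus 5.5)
Your proposal is correct and follows the paper's proof: apply Lemma~\ref{lem:tanh-sum-out-tanh-in} (valid since $a \geq 2 \geq 1$), then use $\tanh(y) \leq y$ for $y \geq 0$. Your explicit check that $p > 0$ and $\frac{ap}{p+p'} \geq \frac{a}{2} \geq 1$, hence $y \geq 0$, spells out the step the paper leaves implicit when it says the result ``follows immediately.''
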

\begin{proof}
    Follows immediately from Lemma \ref{lem:tanh-sum-out-tanh-in}, because $\tanh(x) \leq x$ for $x \geq 0$.
\end{proof}
\section{Technical Details}
\label{sec:technical}
\begin{figure}[t]
\centering
\resizebox{.9\columnwidth}{!}{\begin{tikzpicture}

\begin{axis}[
    xmin=0, xmax=1,
    xtick={0, 0.1, ..., 1.0},
    ymin=0, ymax=0.4,
    ytick={0, 0.05, ..., 0.4},
    xlabel=Training steps (in \%),
    ylabel=IoU target $\tau$ (in IoU),
    yticklabel style={
        /pgf/number format/fixed}
]

\addplot+[ultra thick,
mark=*,black, mark size=3pt, 
mark options={scale=1, fill=white}
] plot coordinates {
(0, 0) (0.3, 0)
(1.0, 0.175)
};
\addlegendentry{Ours}

\addplot+[ultra thick,
mark=*,gray, mark size=3pt, 
mark options={scale=1, fill=white}
] plot coordinates {
(0, 0) (0.3, 0)
(1.0, 0.0)
};
\addlegendentry{easy negatives}
\addplot+[ultra thick,
mark=*,magenta, mark size=3pt, 
mark options={scale=1, fill=white}
] plot coordinates {
(0, 0) (0.3, 0)
(1.0, 0.15)
};
\addlegendentry{$\tau_{\max} = 0.15$}
\addplot+[ultra thick,
mark=*,brown, mark size=3pt, 
mark options={scale=1, fill=white}
] plot coordinates {
(0, 0) (0.3, 0)
(1.0, 0.2)
};
\addlegendentry{$\tau_{\max} = 0.2$}

\end{axis}
\end{tikzpicture}}
\caption{IoU target curriculum for sampling negative bounding boxes. It is zero for the first $30\%$ of steps ($t_{\text{easy}} = 0.3$) and then we increase it linearly to $\tau_{\max} = 0.175$ at $100\%$ of training. By varying $\tau_{\max}$ we create ablations and $\tau_{\max} = 0$ indicates that we only use easy negatives in training.}
\label{fig:iou-target}
\end{figure}
\subsection{Hyperparameters}
\paragraph{Hyperparameters for Reinforcement Learning}
We use a constant learning rate of $1 \times 10^{-6}$ with $3\%$ linear warmup. For GRPO we clip with $\epsilon=0.2$ and use a KL-divergence coefficient $\beta=0.04$ \cite{shao_deepseekmath_2024}. We sample $8$ generations per query with a temperature of $1.0$. We generate a batch of $280$ trajectories, which we save in a buffer. Then we sample $280$ trajectories from the buffer according to their advantages (duplicates are allowed) and perform a gradient update on them. This procedure is known as selective sample replay \cite{wang_vl-rethinker_2025}. After sampling twice in this way, the buffer is flushed and we generate new trajectories with the updated model. We thus employ a slight off-policy method. We train for a single epoch on PR\textbackslash video dataset, i.e. 382 steps, which takes around 35 hours on 8xA100 GPUs. Training a single epoch on Visual Probe (train) \cite{lai_mini-o3_2025} needs 554 steps and takes 78 hours. 
\paragraph{Hyperparameters for zoom-in}
For training, we rescale all images into the interval of [0.392, 3.92] million pixels (corresponding to [500, 5000] visual tokens for Qwen 2.5 VL). This transformation also applies to zoomed-in image areas. We apply 10\% padding to all sides of the zoom-in bounding boxes, up to a maximum of 600 pixels per side. We use absolute bounding box coordinates.
\subsection{Runtimes}
\begin{table}[]
    \centering
    \begin{tabular}{l|rrr}
     Method &  score time (h) & total time (h)  \\
     \hline
     no tool  & 3.03 &  24.67 \\
     Curiosity & {\tiny+74\%}  5.26 &  {\tiny+40\%} 34.53 \\
     Ours  & {\tiny+33\%} 7.02 &  {\tiny+4\%} 35.92 \\ 
    \end{tabular}
    \caption{Training runtime comparison across methods. \textit{Score time} is the time spent on scoring the sequences (i.e. the no\_grad forward passes), which includes the additional forward passes for our method. The percentages are always in comparison to the line above.}
    \label{tab:runtimes}
\end{table}
Because the model develops dynamically during RL training, runtimes may vary. They are dominated by the multi-turn trajectory generation, such that the extra forward pass our method needs does not increase the runtime much. Find an overview in Tab. 
\ref{tab:runtimes}.

\section{Muffin\&Chihuahua Dataset Details}
\label{app:synth-dataset}
\subsection{Dataset Construction}
We start with 8 images of muffins and 8 images of chihuahuas (each 186x186 px) sourced from the internet (see Section \ref{sec:mc-base-image-links}).  Then we pack them together into a grid to form the big image, taking into account the correct label distribution for the tasks (i.e. balanced for \textit{single cell query} and exactly 1 muffin for \textit{find outlier}). In the upper left corner of each cell we write the cell's number, starting from zero and going left to right and top to bottom. We apply the following preprocessing steps to each small image individually \begin{itemize}
    \item $50\%$ horizontal flip 
    \item $50\%$ vertical flip 
    \item $100\%$ rotation from $U([0,2\pi])$ followed by crop (to avoid black corners)
    \item $50\%$ brightness from $U([0.7, 1.3])$
    \item $50\%$ contrast from $U([0.7, 1.3])$
    \item $20\%$ Gaussian blur from $U([0.5, 1.5])$
\end{itemize}
where the initial percentage indicates how often we apply this preprocessing. After the images are constructed, we construct the prompt. In \textit{single cell query} we pick a random cell from each grid such that we end up with 50 samples with gold label muffin and 50 samples with gold label chihuahua. In \textit{find outlier} we align the prompt with the single cell showing the muffin. For both tasks, we randomly perturb the multiple choice labels A and B to counteract position bias. 
\subsection{Prompt templates}
The following line breaks are just for legibility. Actual line breaks are indicated via \texttt{\textbackslash n}. \newline
Initial prompt for both tasks:
\begin{verbatim} 
    "In the image you see a grid, 
    whose cells are numbered from 
    left to right and top to bottom. 
    In each cell, the cell's index is 
    printed in the upper left corner."
\end{verbatim}
Single cell query (replace the '11' with the appropriate cell number): \begin{verbatim}
    "Which object is in cell number
    11?\n(A) Muffin\n(B) Chihuahua\n
    Answer with the option's letter 
    from the given choices directly."
\end{verbatim}
Find outlier: \begin{verbatim}
    "In all cells except one you 
    see a Chihuahua. Which cell does
    not contain a Chihuahua, 
    but a Muffin?\nAnswer only 
    with the cell number."
\end{verbatim}
\subsection{Image sizes}
\begin{table}[]
    \centering
    \begin{tabular}{l|rrr}
    & & \multicolumn{2}{c}{image size} \\
     Dataset & \#images  & mean & median \\
     \hline
     M\&C 1k & 900 & 1.05 & 1.05 \\
     InfoVQA (val) & 500 & 3.92 & 2.25 \\
     MME-RealWorld & 23,609 & 6.95 & 2.70 \\
     V-Star (test) & 191 & 3.49 & 3.38 \\
     M\&C 2k & 900 & 4.19 & 4.19 \\
     HR Bench 4k & 200 & 14.09 & 16.26 \\
     M\&C 4k & 900 & 16.78 & 16.78 \\
     HR Bench 8k & 200 & 39.23 & 38.75 \\
     M\&C 8k & 900 & 67.11 & 67.11 \\
    \end{tabular}
    \caption{Comparison of evaluation datasets, sorted by median image size (in million pixels). \textit{M\&C X} is shorthand for all samples of image size X in our Muffin\&Chihuahua dataset (across grid sizes and tasks).}
    \label{tab:dataset-image-sizes}
\end{table}

A comparison of the image sizes of M\&C splits with image sizes of existing benchmarks can be found in Tab. \ref{tab:dataset-image-sizes}. 
\subsection{Base image links}
\label{sec:mc-base-image-links}
Here is the link format to download the sixteen base images used to construct M\&C: \url{https://www.topbots.com/downloads/code/vision/chihuahua_vs_muffin/test9.png}. They worked on March 17th, 2026. For eight muffins, replace the 9 with 1, 10, 11, 13, 16, 3, 5 and 8. For eight chihuahuas, replace the 9 with 2, 4, 6, 7, 9, 12, 14, 15. In case the images are not available anymore, they can be obtained from the overview site \url{https://www.topbots.com/downloads/code/vision/chihuahua_vs_muffin/} or from Internet Archive's wayback machine \url{https://web.archive.org/web/20240418051558im_/https://www.topbots.com/downloads/code/vision/chihuahua_vs_muffin/test9.png} by replacing the 9 with other numbers from 1 to 16 as above.

\section{Evaluation Details}
\subsection{General Benchmarks}
\label{app:general-benchmarks}
\textbf{HRBench 8k} \cite{wang_hrbench_2025}: 200 general-domain images of 8k resolution, each of them with a single query and 4 multiple-choice labels. The dataset contains 800 samples because the multiple-choice labels are cyclically rotated to counteract label bias.

\textbf{HRBench 4k} \cite{wang_hrbench_2025}: The same as HRBench 8k, except that only a 4k resolution crop of each image (which contains all information to answer the query) is provided. 

\textbf{$V^*$-Bench} \cite{Wu_vstar_2024}: General-domain images with two resp. four multiple-choice options for object relations resp. object attributes.

\textbf{MME-RealWorld} \cite{zhang_mme_realworld_2025}: Five multiple choice options (four semantic ones and a default one, e.g. "image does not contain the requested feature"). On average 1.25 queries per image. The five main domains are Autonomous Driving, Video Monitoring, Diagram/Table, OCR in the Wild and Remote Sensing.

\textbf{InfographicVQA} (val) \cite{mathew_info_vqa_2022}: Dataset of Infographics, many of them in portrait format. It features 5.6 queries per image and is OCR-heavy. Not included in the final evaluation, because no tool-use required (Sec. \ref{sec:benchmark-limitations}). 
\subsection{Evaluated Models}
\label{app:model-details}
Unless otherwise noted, we resize all images to [0.392, 3.92] million pixels, perform greedy decoding (temperature 0) and use exact match accuracy as the metric for evaluation.

\textbf{Pixel-Reasoner} \cite{su_pixel_2025}: We report the values from their paper as well as our own evaluation of their publicly available model. Here images are resized into [0.401, 4.01] million pixels, following their training setup.

\textbf{Mini o3} \cite{lai_mini-o3_2025}: We report the values from their paper as well as our own evaluation, produced by using their code. Images are resized to [0.05, 2.00] million pixels. Following their approach we want to average over at least 6000 samples per dataset, so we decode with temperature 1.0 and report Avg@1 on MME-Realworld, Avg@8 on HRB 4k and 8k, Avg@32 on $V^*$ and Avg@4 on our M\&C dataset. 

\textbf{DeepEyes} \cite{zheng_deepeyes_2025} and \textbf{DeepEyes v2} \cite{hong_deepeyesv2_2025}: We report their evaluations. 

\textbf{no tool}: We train the model in a single-turn fashion on the PR\textbackslash video dataset without access to tools (i.e. tools are not mentioned in the system prompt and are not parsed/executed if generated).

\textbf{Curiosity}: We train the model on the PR\textbackslash video dataset with the curiosity-based tool-use reward from Pixel-Reasoner. We use their hyperparameters, i.e. $H=0.3, N=1, \alpha=0.5, \beta=0.05$.
The main difference is that we use absolute pixels for the tool call and no warm-start SFT stage. 

\begin{table}[h]
\centering
\resizebox{\columnwidth}{!}{
\begin{tabular}{r|rrrrr}
 $\beta$ & V* & HRB 4k & 8k & MME & Avg \\
 \hline
0.01 & 84.29 & \textbf{78.00} & \textbf{72.12} & \textbf{64.89} & \textbf{74.83} \\
0.03 & 79.58 & 76.38 & 68.50 & 62.90 & 71.84 \\
0.30 & \textbf{84.82} & 77.75 & 68.75 & 64.41 & 73.93 \\
1.00 & 81.15 & 76.38 & \textbf{72.12} & 64.53 & 73.55 \\
\end{tabular}
}
\caption{Results for different values of $\beta$ in the \textit{Conditional} reward $r_{acc} + \beta \cdot \mathbb{I}_{\{\text{answer correct}\}} \cdot \mathbb{I}_{\{\text{tool was used}\}}$. Values are reported for tool evaluation, which is always better than the tool-free evaluation setting. There is no monotonicity in the results, but $\beta=0.01$ is clearly the best.}
\label{app:tab:deepeyes-sweep}
\end{table}
\textbf{Conditional}: We train the model on the PR\textbackslash video dataset with the following reward: $r_{acc} + \beta \cdot \mathbb{I}_{\{\text{answer correct}\}} \cdot \mathbb{I}_{\{\text{tool was used}\}}$. Except for the missing format reward (we are using boxed instead of <think> and <answer> tags) this is the constant conditional tool use reward of DeepEyes. We set $\beta = 0.01$ after doing a sweep in Tab. \ref{app:tab:deepeyes-sweep}. 
In contrast to the DeepEyes paper, we stop the generation after a single tool use and give zero reward for multiple tool attempts. In this way the model learns to use the tool exactly once.
\textbf{Ours}
We use the reward from (\ref{formula:final-reward}) with $\alpha = 0.1$ and clip value $\gamma = 1.5$\footnote{As $\tilde{r}_{tool,i}$ is bounded by $\ln(2) \approx 0.69$, this clip value only clips from below.}. We use the following schedule for the IoU target $\tau$: It takes the value of $0$ for the first $30\%$ of steps and then we increase it linearly to $0.175$ at $100\%$ of training (See Fig. \ref{fig:iou-target}).
\begin{table}[]
    \centering
    \resizebox{\linewidth}{!}{\begin{tabular}
    {ll|cc|c}
     & & \multicolumn{2}{c|}{InfographicsVQA} \\
     Method$\downarrow$ & Setup$\rightarrow$ & tool-free & tool & tool delta  \\
     \hline
     PR SFT & own eval & 79.89 &75.45&\color{red}{-4.44} \\
     PR RL & own eval & 82.70	&83.80&\color{green}{+1.1} \\
     \hline
     Curiosity & own train & 84.56 & \textbf{84.00}&\color{red}{-0.56} \\
     Ours & own train& 84.83 & 82.69 & \color{red}{-2.14} \\
     \hline 
     no tool & own train & \textbf{85.49}  & -& - \\
    \end{tabular}}
    \caption{Results on InfographicsVQA \cite{mathew_info_vqa_2022}. The best-performing models do not use tools during evaluation, showcasing that this dataset does not benefit from zoom-in.}
    \label{tab:infovqa}
\end{table}
\begin{table}[]
    \centering
    \begin{tabular}{l|rrrrr}
 task$\downarrow$ grid$\rightarrow$ & 1x1 & 2x2 & 4x4 & 8x8 & 16x16 \\
\hline
single cell & 1.12 & 0.06 & 1.12 & 4.19 &   5.12 \\
find outlier  &  - & 0.31 & 2.44 & 5.81 &   8.12 \\
\end{tabular}
    \caption{Average percentage of times Mini o3 does not give an answer on M\&C dataset after 32 turns. This is correlated with total performance (e.g. Pearson for single cell query except 1x1 grids is $-0.56$)}
    \label{tab:mini-o3-no-answer}
\end{table}

\section{Tool Prompts}
The generic prompt layouts were adapted and refined from \cite{su_pixel_2025}.
\subsection{Tool Description in System Prompt}
\label{App:Sec:tools-in-system-prompt}
 In Fig. \ref{app:fig:tool-prompt} is the generic tool with its two parameters "DESCRIPTION" and "DTYPE" that allow for different bounding box types.
 \begin{figure*}
     \centering
\begin{lstlisting}[language=json]
{
"name": "zoom_in",
"description": "Zoom in on the image based on the bounding box coordinates.",
"parameters": {
	"type": "object",
	"properties": {
		"bbox_2d": {
			"type": "array",
			"description":"DESCRIPTION",
			"items": {
			"type": "DTYPE",
			}
		},
		"target_image":{
			"type": "integer",
			"description": "The index of the image to crop. Index from 1 to the number of images. Choose 1 to operate on original image."
		}
	},
	"required": ["bbox_2d", "target_image"]
}
\end{lstlisting}
     \caption{The generic tool prompt to be inserted in the system prompt. The parameters "DESCRIPTION" and "DTYPE" are replaced as outlined in Sec. \ref{App:Sec:tools-in-system-prompt}.}
     \label{app:fig:tool-prompt}
 \end{figure*}

\noindent As discussed in Sec. \ref{app:sec:cross-model}, the tested models are sensitive to bounding box types. This is reflected in the following tool prompts for absolute and relative bounding boxes.
\paragraph{Absolute pixels (Qwen 2.5 VL)}
\begin{itemize}
    \item "DESCRIPTION": "coordinates for bounding box of the area you want to zoom in. minimum value is 0 and maximum value is the width/height of the image."
    \item "DTYPE": "integer"
\end{itemize}
\paragraph{Relative pixels (Gemma 4)}
\begin{itemize}
    \item "DESCRIPTION": "normalized coordinates for bounding box of the region you want to zoom in. Values should be within [0.0,1.0]"
    \item "DTYPE": "float"
\end{itemize}
\paragraph{Relative integer pixels (InternVL 3.5)}
\begin{itemize}
    \item "DESCRIPTION": "normalized coordinates for bounding box of the region you want to zoom in. Values should be integers in \{0, ..., 1000\} to represent promille values." 
    \item "DTYPE": "integer"
\end{itemize}

\subsection{User prompt}
The following guiding text was appended to each query during training and inference.
\begin{verbatim}
    \n\nGuidelines: Understand the given
    visual information and the user query. 
    Determine if it is beneficial 
    to employ the given visual operations
    (tools). For an image, we can look 
    closer by `zoom_in`. Reason with the
    visual information step by step, and 
    put your final answer within \\boxed{}.
\end{verbatim}

\subsection{Tool Reply}
In all cases, the zoom-in was presented to the model in the following format:

\begin{verbatim}
    \nHere is the cropped image 
    (Image Size: <width>x<height>):<IMG>
\end{verbatim}
where <width> and <height> are replaced by the absolute pixel values of the zoomed-in area in the model's frame of reference (i.e. after the model has pre-processed the image). <IMG> is replaced by the actual image tokens.

\section{Further Experiments}
\subsection{Ablations}
\begin{table*}[]
    \centering
    \resizebox{\linewidth}{!}{\begin{tabular}
    {l|rr|rr|rr|rr|rr}
     Dataset $\rightarrow$  &\multicolumn{2}{c|}{V-Star} & \multicolumn{2}{c|}{HRBench 4k} & \multicolumn{2}{c|}{HRBench 8k} & \multicolumn{2}{c|}{MME-RealWorld} & \multicolumn{2}{c}{Overall} \\
     Method$\downarrow$  & tool-free & tool & tool-free & tool &  tool-free & tool & tool-free & tool & tool-free & tool \\
     \hline 
    per sequence & 82.72 & 83.77 & 76.62 & 77.62 & 68.12 & 72.12 & 62.62 & 62.56 & 72.52 & 74.02 \\
    easy negatives & 83.77 & 84.82 & 70.62 & 74.62 & 64.12 & 66.75 & 63.45 & 61.66 & 70.49 & 71.96 \\
    two negatives & \textbf{85.34} & 84.82 & 76.12 & 76.12 & \textbf{70.00} & 71.25 & \textbf{63.93} & 61.73 & \textbf{73.85} & 73.48 \\
    $\tau_{max} = 0.15$ & 81.68 & 80.63 & 75.50 & 76.50 & 66.50 & 72.75 & 63.28 & 63.50 & 71.74 & 73.34 \\
$\tau_{max} = 0.2$ & 81.68 & 70.16 & 74.50 & 71.50 & 64.62 & 63.88 & 62.12 & 56.57 & 70.73 & 65.53 \\
    \hline
     Ours &  \textbf{85.34} & \textbf{85.34} & \textbf{78.50} & \textbf{78.50} & 67.62 & \textbf{73.38} & 63.45 & \textbf{64.38} & 73.73 & \textbf{75.40} \\
    \end{tabular}}
    \caption{Benchmark results for ablations. The value of $\tau_{max}$ must stay in a narrow corridor to achieve good performance. More negatives seem to help the effect observed in Sec. \ref{sec:general-benchmarks} where the model absorbs enhanced grounding into its parametric knowledge such that it does not depend on the tool at inference time anymore.}
    \label{tab:ablations}
\end{table*}

\input{figures/ablations}
\label{sec:ablations}
We consider the following ablations:
\textbf{per seq} uses the sequence probabilities from Equation (\ref{formula:per-seq}) instead of token probabilities. Here, we do not need to clip or use $\tanh$ because the term is upper-bounded by $\ln(2)$. Thus, we also use $\alpha = 0.1/\ln(2)$ to keep the same final reward upper bound of $0.1$ as the main model.
\textbf{two negatives} uses two negatives instead of one. To accomodate for this, we use a factor $3$ instead of $2$ in the nominator in Eq. (\ref{formula:per-seq}). This is in line with the interpretation of our reward as a special case of InfoNCE (Sec. \ref{sec:info-nce}). The second negative is sampled in a way such that it has a small overlap with the first negative.
\textbf{only easy negatives} keeps $\tau = 0$ during the whole training to ablate the need for hard negatives. 
\textbf{$\tau_{max}$} ablates the maximum value of $\tau$ at the end of training. We run $\tau_{max} = 0.15$ and $0.2$ to ablate our choice of $\tau_{max} = 0.175$ (Fig. \ref{fig:iou-target}).
See Tab. \ref{tab:ablations} for results and Fig. \ref{fig:ablations} for a comparison of training dynamics. 
\subsection{Extension to multiple tool uses}
\label{sec:extensions}
Although we have shown (see Fig. \ref{fig:efficiency}) that a single well-placed zoom-in operation is very competitive, we want to briefly discuss how to extend the proposed method for $m > 1$ tool calls. Applying the extra reward to all $m$ tool calls requires $m$ forward passes and because of self-correcting behaviour we can not be sure that every tool call in the trajectory is a positive. Instead, we propose to use only the last tool call to calculate the additional reward before the model gives a correct answer. However, we found the empirical results of this setup to be subpar, likely because multiple meaningful zoom-ins are too difficult to learn for Qwen 2.5 VL without SFT \cite{su_pixel_2025}.

\section{Qualitative Examples}
\subsection{Reward Hacking}
\label{app:sec:qual-hacking}
As an example we use sample number 111 from HR Bench 4k \cite{wang_hrbench_2025}, which is shown in Fig. \ref{app:fig:qualitative-example}. 
We use it to illustrate the issues we face without a curriculum of hard negatives. The question for this image is "What color is the backpack carried by the man wearing a yellow shirt?". We annotated the gold region of interest by hand (given in red) which makes up 0.2\% of the image. 
Model tool call bounding boxes are given in thick colors. Thin colors show the image part emitted by the tool after padding was applied, which is a strict superset of the thick box. \textit{Ours} uses the tool correctly as an information-seeking action and executes it very well which leads to the correct answer. 
\paragraph{First Reasoning} \begin{verbatim}
To determine the color of the 
backpack carried by the man 
wearing a yellow shirt, I'll zoom 
in on that specific area of the image. 
This will allow me to get a closer 
look at the details of the man and 
his belongings.\end{verbatim}
 \paragraph{Tool Call} Covers 0.79\% of the overall image and has an IoU of 0.28 with the RoI (Recall 1.0). 
 \paragraph{Second Reasoning} \begin{verbatim}The man wearing the yellow shirt is
carrying a blue backpack.\\boxed{D} \end{verbatim}

Contrast this with the \textit{no curriculum} baseline (see Sec. \ref{sec:ablations}): 

\paragraph{First reasoning} \begin{verbatim} 
To determine the color of the backpack 
carried by the man wearing a yellow 
shirt, let's analyze the image step 
by step: 1. Identify the man wearing
a yellow shirt: He is standing near
the entrance of the building, 
facing the camera. 2. Look for the 
backpack: The man is carrying a 
backpack, which is clearly visible 
on his back. 3. Determine the color
of the backpack: The backpack appears
to be black. Given this reasoning, 
the answer is: (A) Black. Now, let's
put the answer in the required format:
\end{verbatim}
\paragraph{Tool Call} Covers 20.95\% of overall image and has an IoU of 0.01 with the RoI (Recall 1.0)
\paragraph{Second reasoning} \begin{verbatim}
As observed in the zoomed-in image,
the man wearing a yellow shirt is
carrying a black backpack. Therefore,
the answer is: (A) Black \boxed{A}
\end{verbatim}
This model guesses the answer based on the big image although it can clearly not make out the details (e.g. the man "facing the camera" is not true). Then, it calls the tool for verification only (to comply with the "required format"). The tool use is inexact and we never observed that the model actually changed its initial guess based on the new image. Further, there are repeated words (‘black’). When we look at our objective $\approx \max P(M_2|S_{pre}) / P(M_2|S_{pre’})$ we see that $P(M_2|S_{pre’})$ is low, because without a curriculum, the zoom-in regions are distinct. Then the model starts maximizing this reward simply by artificially inflating its confidence in $P(M_2|S_{pre})$, e.g. based on word repetitions and ‘verifying’ the initial guess from its first reasoning.

\begin{figure*}
\resizebox{\linewidth}{!}{%
    \includegraphics{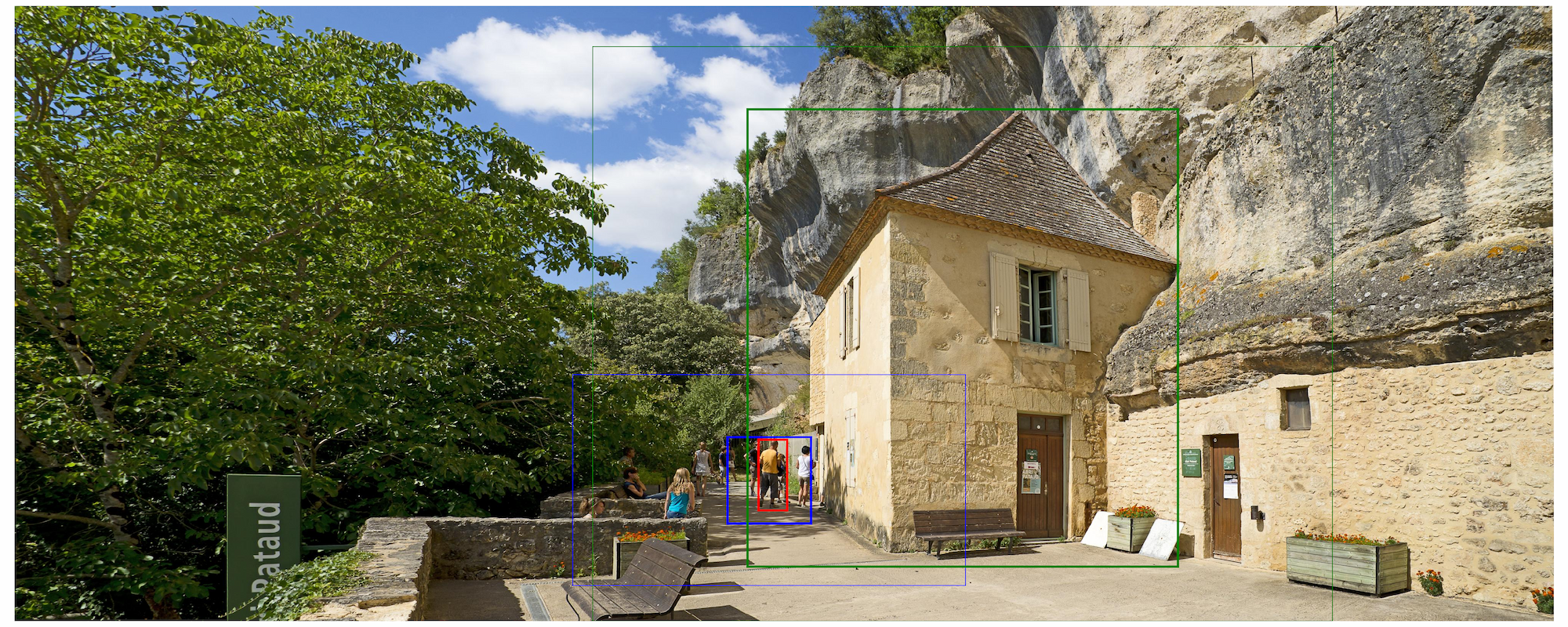}
    }
    \caption{Sample 111 from HR Bench 4k \cite{wang_hrbench_2025}. Marked in red is our hand-annotated gold region of interest. Thick boxes indicate the position of the bounding box as it was requested by the model in its tool call. Thin boxes show the bounding box position of the actual zoom, after the tool applied padding. Ours is given in blue and the easy negative ablation is given in green (see Sec. \ref{sec:ablations}).}
    \label{app:fig:qualitative-example}
\end{figure*}
\subsection{Excessive Tool Use}
\label{app:sec:excessive-too-use}
Looking at Table \ref{tab:mc-scq-auto}, we see that our models fall short on $1\times1$ grids, i.e. in scenarios where calling the tool is unnecessary. A qualitative example of this behaviour is in Fig. \ref{app:fig:qualitative-example-1x1}, which shows sample 86 of the $1\times 1, 8k \times 8k$-split of M\&C. Ours+ in blue manages to zoom into the cell number almost  perfectly (IoU 0.85). This behaviour is not helpful as it loses the big picture and classifies the image wrongly as a chihuahua. Curiosity+ in green selects a much bigger area which keeps more global context and allows it to answer correctly. We can conclude that the zoom-in behaviour of selecting very sharp bounding boxes that made our model excel in Sec. \ref{app:sec:qual-hacking} causes it to fail here.

\begin{figure*}
    \centering
    \includegraphics[scale=0.1]{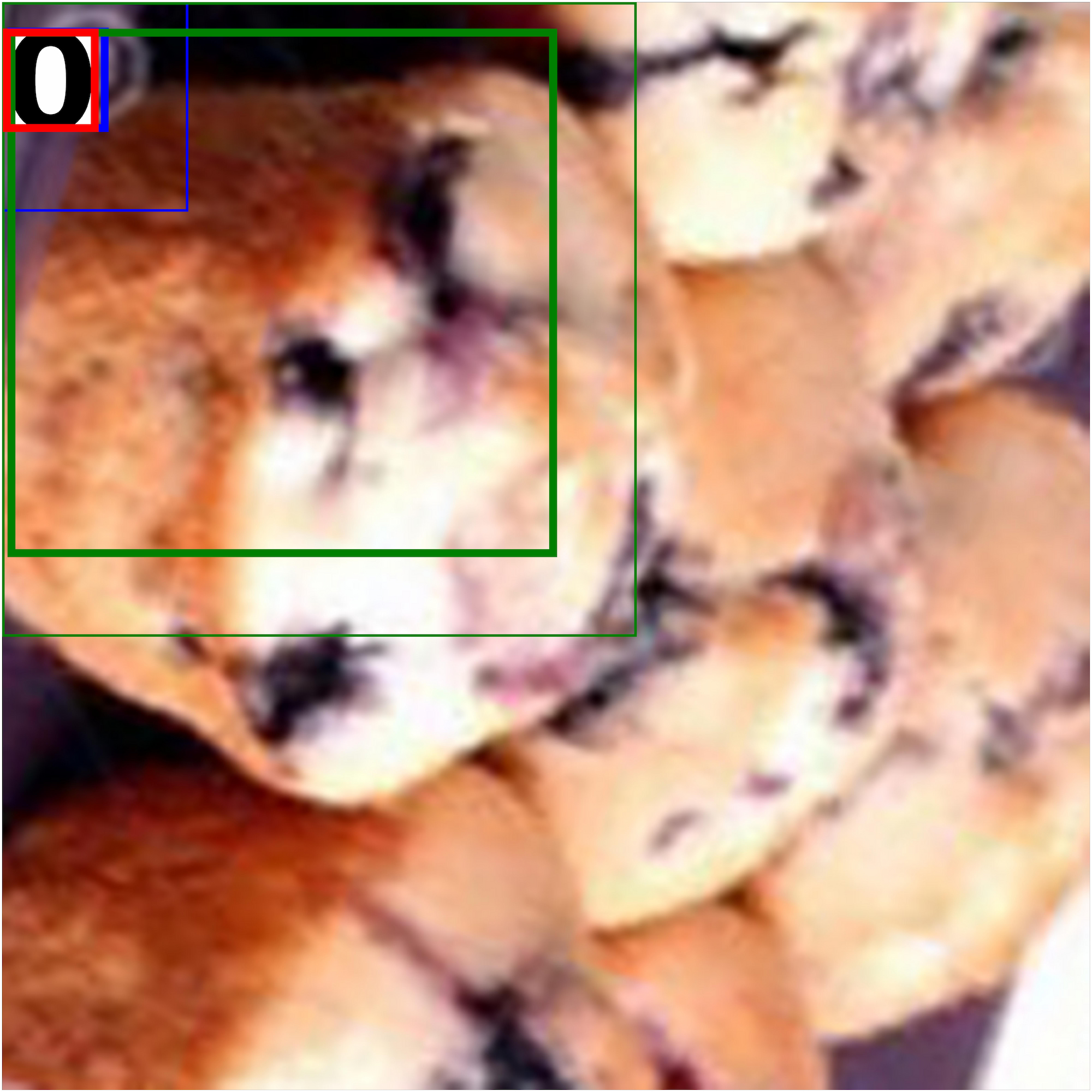}
    \caption{Sample 86 from the $1\times1,8k\times8k$-split of our M\&C dataset. Marked in red is the cell index. Thick boxes indicate the position of the bounding box as it was requested by the model in its tool call. Thin boxes show the bounding box position of the actual zoom, after the tool applied padding. Ours+ is given in blue and Curiosity+ is given in green.}
    \label{app:fig:qualitative-example-1x1}
\end{figure*}

We can analyse this behaviour more generally by looking at the full 100 samples of the $1\times1, 8k\times8k$-split (see Table \ref{app:tab:excessive-tool-call}). We see that both models actually share the previous failure, i.e. they zoom into the cell index number in the upper-left corner rather than the full cell (see Fig. \ref{fig:sample-muffin}). The performance gap (9 points) tracks the difference in how often each model targets the full image (4\% vs 14\%). But notably, IoU and task performance are decoupled on this split (correlations near zero), because when the entire image is the RoI, "good" zoom-in by IoU doesn't help the task. This suggests the issue is not zoom-in quality but rather a learned reflex to call the tool, common across RL-trained zoom-in models. Pruning unnecessary tool calls remains a hard problem (cf. Section 4.5. in \citet{bai_qwen3-vl_2025}). Simply adding a small constant negative tool-use reward, which works for web search \cite{wu_mmsearch-r1_2025}, caused our model to abandon the tool entirely.

\begin{table}[]
    \centering
    \begin{tabular}{l|rr}
                                  & Ours+ & Curiosity+ \\
                                  \hline
       Task accuracy              &  72\%  &  81\%       \\
       Tool use ratio             &  99\% &  100\%    \\
       Avg. zoom-in area          & 8.2\% & 22.5\%     \\
       Zooms targeting cell index & 96\%  & 77\%       \\
       Zooms targeting full image & 4\%   & 14\% \\
       Pearson(IoU, accuracy)     & -0.1  & 0.18  \\
    \end{tabular}
    \caption{Comparison of Ours+ and Curiosity+ on $8k \times 8k$ pixel images with $1\times 1$ grids for the single cell query task. These cases do not need a tool call, yet both models always call it.}
    \label{app:tab:excessive-tool-call}
\end{table}
\section{Algorithms}
The pseudocode for generating the hard negative bounding boxes can be found in Algorithm \ref{app:alg:bbox}.App. $B$, $U$ and $LogNormal$ denote Bernoulli, Uniform and Lognormal distributions, respectively. 

\begin{algorithm*}
\caption{Bounding Box Generation with IoU Constraint}
\label{app:alg:bbox}
\begin{algorithmic}[1]

\Require IoU target $\tau$, bounding box $\mathcal{B} = (x_1, y_1, x_2, y_2)$, image size $W, H$
\Require Hyperparameters: tolerance $\epsilon$, minimal image size $M_W, M_H$

\State $m_w, m_h \gets \frac{M_W}{W}, \frac{M_H}{H}$ \Comment{relative minimal image size}
\State $A \gets \text{area}(\mathcal{B})$
\State $w, h \gets x_2 - x_1, y_2 - y_1$

\If{$\tau = 0$}
    \If{$\mathcal{B} = (0, 0, 1, 1)$}
        \Comment{contradicting constraints, best effort}
        \State $x_1' \sim U([0, 1])$
        \State $y_1' \sim U([0, 1])$
        \State $\mathcal{B}' \gets (x_1', y_1', x_1' + m_w, y_1' + m_h)$
    \Else
        \Comment{random box with no overlap}
        \State $\text{iou} \gets 1$
        \While{$\text{iou} > 0$}
            \State $w' \sim U([m_w, 0.6])$
            \State $h' \sim U([m_h, 0.6])$
            \State $x_1' \sim U([0, 1 - w'])$
            \State $y_1' \sim U([0, 1 - h'])$
            \State $\mathcal{B}' \gets (x_1', y_1', x_1' + w', y_1' + h')$
            \State $\text{iou} \gets \text{IoU}(\mathcal{B}, \mathcal{B}')$
        \EndWhile
    \EndIf
\Else
    \State $\text{iou} \gets \infty$
    \State $\sigma_{\text{area}} \gets -0.9\tau + 0.98$
    \State $\sigma_{\text{aspect}} \gets 0.55$
    
    \While{$|\text{iou} - \tau| > \epsilon$}
        \Comment{get size of $\mathcal{B}'$}
        \State $A' \gets A \cdot \text{LogNormal}(0, \sigma_{\text{area}})$
        \State $r \gets \text{LogNormal}(0, \sigma_{\text{aspect}})$
        \State $w' \gets \sqrt{A' \cdot r}$
        \State $h' \gets \sqrt{A' / r}$
        
        \Comment{get position of $\mathcal{B}'$}
        \State $I \gets \frac{\tau(A + A')}{1 + \tau}$ \Comment{$\frac{I}{A + A' - I} = \tau$}
        
        \Comment{get overlap width and height}
        \State $w_o \sim U([0.15, 1])$
        \State $h_o \gets I / w_o$
        
        \Comment{position calculation}
        \State $c_x \gets \frac{x_1 + x_2}{2}$
        \State $c_y \gets \frac{y_1 + y_2}{2}$
        \State $x_1' \gets c_x + (-1)^{B(0.5)} \left(\frac{w + w'}{2} - w_o\right) - \frac{w'}{2}$
        \State $y_1' \gets c_y + (-1)^{B(0.5)} \left(\frac{h + h'}{2} - h_o\right) - \frac{h'}{2}$
        
        \State $\mathcal{B}' \gets (x_1', y_1', x_1' + w', y_1' + h')$
        \State $\text{iou} \gets \text{IoU}(\mathcal{B}, \mathcal{B}')$
    \EndWhile
\EndIf

\Ensure bounding box $\mathcal{B}'$

\end{algorithmic}
\end{algorithm*}

\section{Cross Model Results}
\label{app:sec:cross-model}
\begin{table*}[h]
\centering
\resizebox{\linewidth}{!}{
\begin{tabular}{l|cc|cc|cc|cc|cc}
 & \multicolumn{2}{c|}{V-Star} & \multicolumn{2}{c|}{HRBench 4k} & \multicolumn{2}{c|}{HRBench 8k} & \multicolumn{2}{c|}{MME-RealWorld} & \multicolumn{2}{c}{Avg} \\
\hline
 & tool free & tool & tool free & tool & tool free & tool & tool free & tool & tool free & tool \\
\hline
no tool & 73.82 & - & 68.63 & - & 58.38 & - & 60.28 & - & 65.28 & - \\
Curiosity & 71.73 & 79.58 & 67.00 & 69.00 & 58.88 & 68.00 & 61.03 & 63.39 & 64.66 & 69.99 \\
Conditional & 71.73 & 80.10 & 66.88 & \textbf{74.25} & 60.00 & 69.88 & 61.81 & 64.09 & 65.10 & 72.08 \\
Ours & 69.11 & 74.87 & 61.75 & 70.50 & 55.25 & 64.63 & 59.41 & 58.04 & 61.38 & 67.01 \\
\hline
no tool+ & 75.39 & - & 70.00 & - & \textbf{64.25} & - & \textbf{64.41} & - & 68.51 & - \\
Curiosity+ & 74.35 & \textbf{81.15} & 63.50 & 71.38 & 58.75 & 68.50 & 61.78 & 63.23 & 64.59 & 71.07 \\
Conditional+ & \textbf{75.92} & 80.10 & \textbf{70.75} & 73.75 & 63.63 & 70.25 & 63.50 & 65.55 & 68.45 & 72.41 \\
Ours+ &73.82	&\textbf{81.15}	&68.00&	72.75	&62.13	&\textbf{72.88}	&63.69	&\textbf{65.24}	&66.91	&\textbf{73.00}
\end{tabular}
}
\caption{Benchmark results on InternVL 3.5 8B \cite{wang_internvl35_2025}. Our method strongly benefits from continual training and takes over the other baselines.}
\label{app:tab:internvl}
\end{table*}
\begin{table*}[h]
\centering
\resizebox{\linewidth}{!}{
\begin{tabular}{l|cc|cc|cc|cc|cc}
 & \multicolumn{2}{c|}{V-Star} & \multicolumn{2}{c|}{HRBench 4k} & \multicolumn{2}{c|}{HRBench 8k} & \multicolumn{2}{c|}{MME-RealWorld} & \multicolumn{2}{c}{Avg} \\
\hline
 & tool free & tool & tool free & tool & tool free & tool & tool free & tool & tool free & tool \\
\hline
no tool & \textbf{69.11} & - & 67.88 & - & 63.63 & - & 61.65 & - & \textbf{65.57} &-  \\
Curiosity & 64.40 & \textbf{76.44} & \textbf{69.38} & 72.00 & \textbf{66.13} & 71.88 & \textbf{62.12} & \textbf{62.51} & 65.51 & \textbf{70.71} \\
Conditional & 59.69 & 71.20 & 67.75 & \textbf{73.88} & 64.88 & \textbf{72.75} & 60.68 & 61.65 & 63.25 & 69.87 \\
\hline
Ours & 62.30 & 66.49 & 68.00 & 73.13 & 60.62 & 69.88 & 60.41 & 61.45 & 62.83 & 67.74 \\

\end{tabular}
}
\caption{Benchmark results on Gemma 4 E4B \cite{team_gemma_2026}. The performance of all variants tested is not great. Curiosity performs best and our method ranks last.}
\label{app:tab:gemma}
\end{table*}
Results for InternVL 3.5 8B \cite{wang_internvl35_2025} can be found in Tab. \ref{app:tab:internvl}. We used relative integer coordinates from 0 to 1000 and applied a padding of 10\% to avoid degenerate bounding boxes and the models were trained without thinking. Images were resized into two to twenty patches, i.e. [0.4,4] million pixels be comparable with Qwen. For our approach, we kept the tool call fixed and only varied the tool execution. Formally, $S_{pre}' := (Q, M_1^R, M_1^T, T'_E)$. Without that, the model hacks the reward by differentiating the prefixes solely based on the tool call. For this it generates inaccurate, but very round bounding boxes (e.g. 100, 100, 400, 400) which contrast well with our more continuous alternative pixel values. 

Results for Gemma 4 E4B \cite{team_gemma_2026} can be found in Tab. \ref{app:tab:gemma}. We used relative pixel values in $[0,1]$ without padding and trained without thinking. We used the full resolution of 1120 image tokens. Additionally, we did not stop the generation during training if the model used the tool too often, but let it continue and gave zero reward. Without this, the model tended to fall into endless tool use loops during evaluation as it was suddenly out-of-distribution. Finally, the model often put the result after channel end during evaluation, i.e. 
\begin{verbatim}
    <channel|>10%
\end{verbatim}
instead of \begin{verbatim}
    \\boxed{10%}
\end{verbatim}
which we counted as a correct format and parsed normally. We found training our method unstable on Gemma and the standard curriculum produced deteriorated results. We hypothesize that this was because of the initial 30\% of easy negatives, so we removed them and had a linear curriculum from the beginning (i.e. $t_{\text{easy}} = 0$). This led to the model abandoning tool use at the very end of training, but during evaluation it used the tool most of the times.
\section{Additional Results}
\paragraph{Results on InfographicVQA}
\label{app:infovqa}
In \citet{su_pixel_2025}, they evaluate on the InfographicVQA dataset \cite{mathew_info_vqa_2022} although its median image size is only $2.25$M pixels (Tab. \ref{tab:dataset-image-sizes}). 
Consequently, when training models on the PR\textbackslash video dataset (Sec. \ref{sec:train-setup}) the best performing one was trained without tool access (Tab. \ref{tab:infovqa}).
Thus, we excluded it from the official results table (Sec. \ref{sec:benchmark-limitations}).
\paragraph{Overlap metrics}
In Fig. \ref{fig:overlap-metrics} there is a graphical explanation of the overlap metrics precision, recall and intersection-over-union (IoU) which we use to analyse zooming-in behaviour (Tab. \ref{tab:mc-overlap-metrics-auto}).
\paragraph{Mini o3 no answer} We observed that in some cases, Mini o3 did not give an answer after 32 tool uses when we terminated the generation. They can be found in Tab. \ref{tab:mini-o3-no-answer}.
\paragraph{\textit{Conditional} $\beta$ sweep}
As we could not infer the reward weight of the conditional tool use reward from the DeepEyes paper \cite{zheng_deepeyes_2025}, we did our own sweep over possible values in Tab. \ref{app:tab:deepeyes-sweep}. The results do not show any monotonicity, but $\beta = 0.01$ is clearly the best. 
\end{document}